\def\eqref#1{equation~\ref{#1}}
\def\1{\bm{1}}
\def\ve{{\bm{e}}}
\def\vs{{\bm{s}}}
\DeclareMathAlphabet{\mathsfit}{\encodingdefault}{\sfdefault}{m}{sl}
\SetMathAlphabet{\mathsfit}{bold}{\encodingdefault}{\sfdefault}{bx}{n}
\theoremstyle{plain}
\newtheorem{theorem}{Theorem}[section]
\newtheorem{definition}[theorem]{Definition}
\theoremstyle{remark}
\title{Two-Dimensional Weisfeiler-Lehman Graph Neural Networks for Link Prediction}
\author{%
  Yang Hu$^{1}$~~~~Xiyuan Wang$^{1}$~~~~Zhouchen Lin$^{2,1}$~~~~Pan Li$^{3}$~~~~Muhan Zhang$^{1,4,}$\thanks{Corresponding author: Muhan Zhang (\texttt{muhan@pku.edu.cn}).}\\
  ${}^1$Institute for Artificial Intelligence, Peking University\\ 
  ${}^2$Key Lab. of Machine Perception, School of Artificial Intelligence, Peking University\\ 
  ${}^3$Department of Computer Science, Purdue University\\ 
  ${}^4$Beijing Institute for General Artificial Intelligence\\
  \texttt{huyang0625@pku.edu.cn, wangxiyuan@pku.edu.cn}\\ \texttt{zlin@pku.edu.cn, panli@purdue.edu, muhan@pku.edu.cn}
}
\begin{document}

\maketitle


\begin{abstract}
Link prediction is one important application of graph neural networks (GNNs). Most existing GNNs for link prediction are based on one-dimensional Weisfeiler-Lehman ($1$-WL) test. $1$-WL-GNNs first compute node representations by iteratively passing neighboring node features to the center, and then obtain link representations by aggregating the pairwise node representations. As pointed out by previous works, this two-step procedure results in low discriminating power, as $1$-WL-GNNs by nature learn node-level representations instead of link-level. In this paper, we study a completely different approach which can directly obtain node pair (link) representations based on \textit{two-dimensional Weisfeiler-Lehman ($2$-WL) tests}. $2$-WL tests directly use links (2-tuples) as message passing units instead of nodes, and thus can directly obtain link representations. We theoretically analyze the expressive power of $2$-WL tests to discriminate non-isomorphic links, and prove their superior link discriminating power than $1$-WL. Based on different $2$-WL variants, we propose a series of novel $2$-WL-GNN models for link prediction. Experiments on a wide range of real-world datasets demonstrate their competitive performance to state-of-the-art baselines and superiority over plain $1$-WL-GNNs.





\end{abstract}

\section{Introduction}

Link prediction is a key problem of graph-structured data~\citep{al2006link,liben2007link,menon2011link,trouillon2016complex}. It refers to utilizing node characteristics and graph topology to measure how likely a link exists between a pair of nodes. Due to the importance of predicting pairwise relations, it has wide applications in various domains, such as recommendation in social networks~\citep{adamic2003friends}, knowledge graph completion~\citep{nickel2015review}, and metabolic network reconstruction~\citep{oyetunde2017boostgapfill}.

One class of traditional link prediction methods are heuristic methods, which use manually designed graph structural features of a target node pair such as number of common neighbors (CN)~\citep{liben2007link}, preferential attachment (PA)~\citep{barabasi1999emergence}, and resource allocation (RA)~\citep{zhou2009predicting} to estimate the likelihood of link existence. Another class of methods, embedding methods, including Matrix Factorization (MF)~\citep{menon2011link} and node2vec~\citep{grover2016node2vec}, learn node embeddings from the graph structure in a transductive manner, which cannot generalize to unseen nodes or new graphs. Recently, with the popularity of GNNs, their application to link prediction brings a number of cutting-edge models~\citep{kipf2016variational,zhang2018link,zhang2021labeling,zhu2021neural}.

Most existing GNN models for link prediction are based on one-dimensional Weisfeiler-Lehman ($1$-WL) test~\citep{shervashidze2011weisfeiler}. $1$-WL test is a popular heuristic for detecting non-isomorphic graphs. In each update, it obtains all nodes' new colors by hashing their own colors and multisets of their neighbors' colors. Vanilla GNNs simulate $1$-WL test by iteratively aggregating neighboring node features to the center node to update node representations, which we call $1$-WL-GNNs. With the node representations, $1$-WL-GNNs compute link prediction scores by aggregating pairwise node representations. Graph AutoEncoder (GAE, and its variant VGAE)~\citep{kipf2016variational} is such a model. However, $1$-WL-GNNs can only discriminate links on the ``node'' level. If two nodes $v_2$ and $v_3$ have symmetrical positions in graph $G$, then for another node $v_1$ in $G$, GAE cannot distinguish links $(v_1, v_2)$ and $(v_1, v_3)$, though they may not be symmetrical links in graph $G$. See Figure~\ref{fig:1} left. Although positional node embeddings or random features can alleviate this problem, they fail to guarantee symmetrical links (such as $(v_1, v_2)$ and $(v_4, v_3)$) to have the same representation.

\begin{wrapfigure}[11]{l}{7.3cm}
\vspace{-11pt}
\centering
\includegraphics[height=2.3cm]{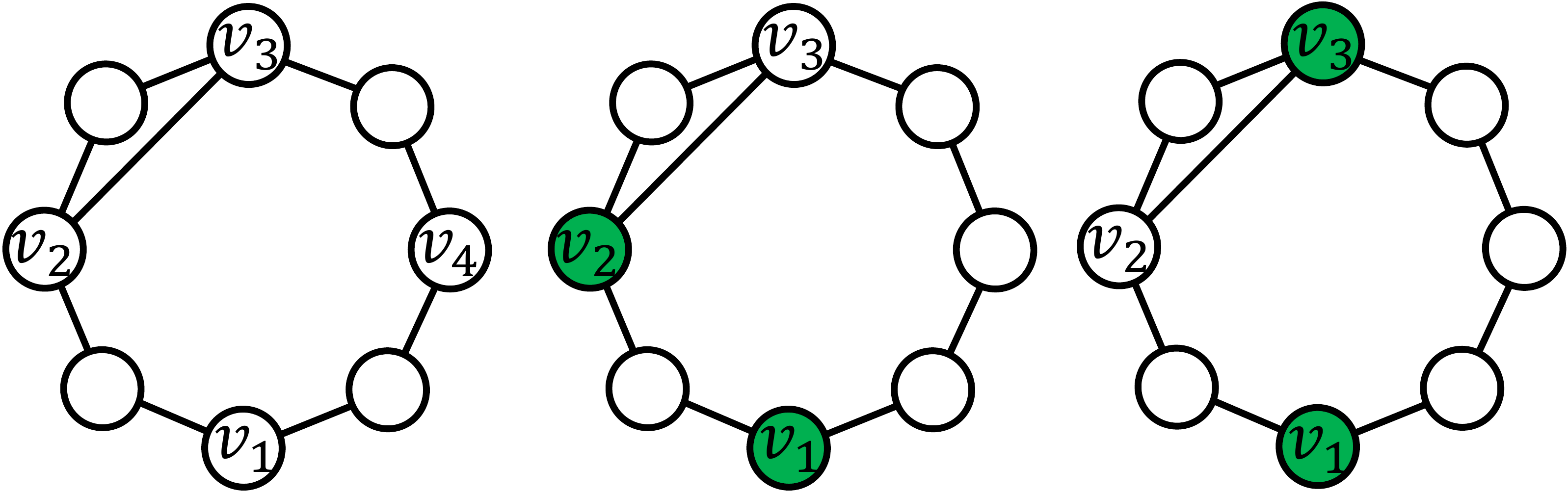}
\vspace{-18pt}
\caption{$1$-WL-GNNs cannot distinguish links $(v_1, v_2)$ and $(v_1, v_3)$ in the left graph. With labeling trick, $1$-WL-GNNs can distinguish them in their respective labeled graphs (middle and right).}
\label{fig:1}
\end{wrapfigure}
Another classic model SEAL~\citep{zhang2018link} applies labeling trick~\citep{zhang2021labeling} to augment input node features with target-link-specific node labels, and applies a $1$-WL-GNN to each link's enclosing subgraph to learn the link representation. Labeling trick brings asymmetry between the target node pair and all other nodes during the message massing. Although labeling trick raises the link discriminating power from ``node'' to ``link'' level, it requires extracting a subgraph and repeatedly applying GNN for every link to predict. We include more discussion in Section~\ref{sec:labelingtrick}. 

In this paper, we propose a completely different approach for link prediction. We construct GNNs based on two-dimensional Weisfeiler-Lehman ($2$-WL) tests, which we call $2$-WL-GNNs. In $2$-WL-GNNs, node pairs are used as the elemental message passing units, so that link representations are directly obtained instead of aggregating pairwise node representations as in $1$-WL-GNNs. Figure~\ref{fig:2fwll} gives an illustration for a particular $2$-WL. We first theoretically study the link discriminating power of different $2$-WL test variants, including the plain $2$-WL test, $2$-FWL (Folklore WL), local $2$-WL, and local $2$-FWL. We show that $2$-WL, $2$-FWL and local $2$-FWL are strictly more expressive than $1$-WL for link prediction, while local $2$-WL has equivalent link discriminating power to $1$-WL. Based on these $2$-WL tests, we propose a series of $2$-WL-GNN models. Despite all using node pairs to propagate messages, these models have different aggregation schemes, link discriminating power, time/space complexity, as well as drastically different implementations, which we discuss in Section~\ref{sec:implementation}. Extensive experiments on multiple benchmark datasets verify $2$-WL-GNNs' power for link prediction. We show that $2$-WL-GNNs achieve highly competitive link prediction performance to state-of-the-art models including SEAL and NBFNet~\citep{zhu2021neural}, while using significantly less time. 




\section{Limitations of using $1$-WL for link prediction}
In this section, we show the fundamental limitations of using $1$-WL-GNNs for link prediction. 
We denote a set by $\{a,b,c,...\}$, an ordered set (tuple) by $(a,b,c,...)$ and a multiset by $\{\!\!\{a,b,c,...\}\!\!\}$, where a multiset is allowed to have repeated elements. We use $[n]$ to denote the set $\{1,2,...,n\}$. Let $G = (V, E, l)$ be a labeled graph, where $V=[n]$ is the node set and $E \subseteq [n]\times[n]$ is the edge set, and $l: V\rightarrow \Sigma$ gives each node an initial label from $\Sigma$. Weisfeiler-Lehman tests are a series of algorithms to determine non-isomorphic graphs. In the base case of $1$-WL, at the beginning, every node $v$ has its representation (color) $c^{(0)}(v) = l(v)$. In iteration $t$, the representation of node $v$ is updated by $c^{(t)}(v) = f(c^{(t-1)}(v), \{\!\!\{c^{(t-1)}(u) | u\in N(v) \}\!\!\})$, where $f$ is an injective function and $N(v)$ denotes $v$'s neighbors.
$1$-WL can detect two non-isomorphic graphs if they have different multisets of node representations in some iteration. $1$-WL-GNN implements $f$ with neural networks.



\subsection{$1$-WL cannot learn link-level representations}
GAE is a representative $1$-WL-GNN model for link prediction. GAE first uses a vanilla $1$-WL-GNN to compute node representations, and then aggregates two node representations to obtain their link representation. \citet{zhang2021labeling} have studied that the direct aggregation has only node-level discriminating power. This is illustrated by Figure \ref{fig:1} left: $v_2$ and $v_3$ are symmetric nodes in the graph thus having the same representation by $1$-WL-GNN, but links $(v_1, v_2)$ and $(v_1, v_3)$ are not symmetric. However, by aggregating pairwise node representations as link representations, $1$-WL-GNNs are unable to discriminate links $(v_1, v_2)$ and $(v_1, v_3)$, though $(v_1, v_2)$ has a shorter path between them than $(v_1, v_3)$. $1$-WL-GNNs cannot even \textbf{learn common neighbors} between two nodes, which is a crucial link prediction heuristic for many networks. By computing node representations independently of each other, they completely lose the conditional information and dependence between the two target nodes. This reveals one fundamental limitation of using $1$-WL for link prediction: $1$-WL discriminates links only at the \textbf{node level}---it fails to learn link representations \textbf{as a whole}, but regards two links as indistinguishable if the corresponding nodes from the two links are indistinguishable.

\subsection{Labeling trick enhances $1$-WL-GNNs' link discriminating power}\label{sec:labelingtrick}
There are many link prediction models that apply labeling trick inherently, including SEAL~\citep{zhang2018link}, Distance Encoding~\citep{li2020distance}, ID-GNN~\citep{you2021identity}, and some models for matrix completion~\citep{Zhang2020Inductive} and knowledge graph completion~\citep{teru2020inductive}. These methods all label nodes according to their relation to the target link and then apply $1$-WL-GNN to the labeled graph. The target node representations obtained in the labeled graph are then aggregated into the link representation. The inherent mechanism, labeling trick~\citep{zhang2021labeling}, is proved to significantly enhance the link discriminating power of $1$-WL-GNNs, and ultimately promote a node-most-expressive GNN to be link-most-expressive thus theoretically closing the gap between GNN's node representation learning nature and link prediction's link representation requirement.
Figure~\ref{fig:1} middle and right illustrate this effect. When predicting $(v_1, v_2)$, $v_1$ and $v_2$ are labeled differently from other nodes; when predicting $(v_1, v_3)$, $v_1$ and $v_3$ are labeled differently from other nodes. Thus, links $(v_1, v_2)$ and $(v_1, v_3)$ can be differentiated by applying $1$-WL to their respective labeled graphs, as the labeled $v_1$ will pass its message to $v_2$ and $v_3$ with different number of steps. 



Although labeling trick brings fundamental improvement to GNN's link discriminating power, it also introduces some challenge. Labeling trick requires repeatedly applying GNN to a labeled subgraph for \textbf{every} link to predict. This is in contrast to GAE which can apply GNN to the entire graph only once and simultaneously learn representations for all target links. 
In other words, labeling trick methods lose the ability to obtain all link predictions in a single GNN inference step, and therefore often have low efficiency. In this paper, we aim to develop novel GNN models with both full-batch link prediction ability and higher expressive power than $1$-WL.



\section{Two-dimensional Weisfeiler-Lehman tests}


In $k$-dimensional WL test ($k$-WL), the unit of representation update becomes $k$-tuples of nodes. When $k=2$, every node pair updates its representation from its ``neighboring'' node pairs. Thus, node pair representations can be directly obtained for link prediction. $k$-WL-GNNs inherit $k$-WL by making the update functions neural networks. Since neural networks are universal approximators, $k$-WL-GNNs have the same maximum expressive power as $k$-WL~\citep{xu2018powerful,morris2019weisfeiler} in graph-level tasks. 
\begin{figure}[t]
\centering
\includegraphics[height=2.5cm]{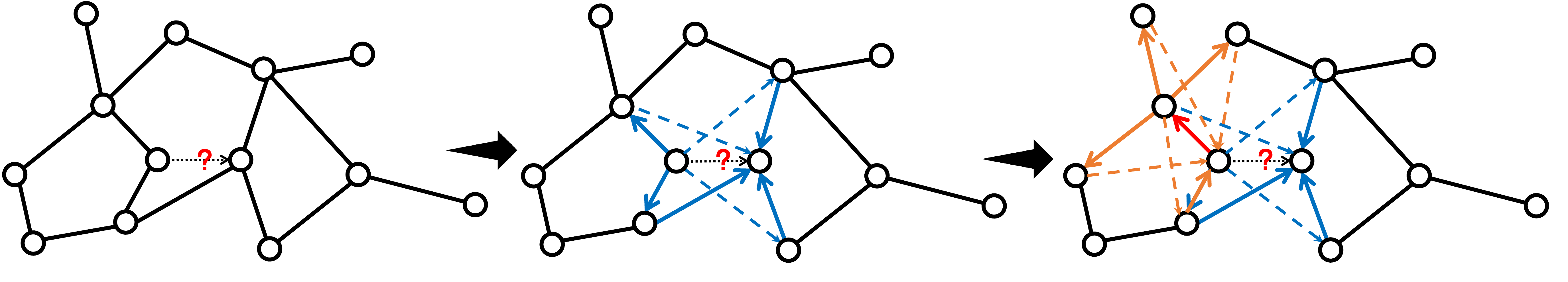}
\caption{This figure illustrates how local $2$-FWL, a particular $2$-WL test, works for link prediction. It takes links as message passing units. Given the centering directed link to predict, local $2$-FWL aggregates information from its neighboring links in each iteration. The neighboring links in the first iteration are shown as blue lines. Note that even unconnected links can be involved. In the second iteration, information from farther links are aggregated. We take the red link as an example and show the $2$-hop link neighborhood connected by this link as orange lines.}
\label{fig:2fwll}
\end{figure}
There are two variants of $k$-WL algorithms: the plain $k$-dimensional WL ($k$-WL) and the $k$-dimensional Folklore WL ($k$-FWL)~\citep{cai1992optimal,grohe2017descriptive}. In the following, we will use $k$-WL to specifically denote the plain version and use $k$-FWL to denote the Folklore version. Both $k$-WL and $k$-FWL update representations for $k$-tuples of nodes, where a $k$-tuple $\vs$ is defined by $\vs:=(s_1,s_2,...,s_k)$ with $s_1,...,s_k$ being nodes. 

$k$-WL defines the \textit{$j$th neighborhood} of $k$-tuple $\vs$ as
\begin{align}
N_j(\vs) = \big\{\!\!\big\{(s_1,...,s_{j-1},s',s_{j+1},...,s_k)\vert s'\in[n]\big\}\!\!\big\}. 
\end{align}

That is, the $j$th neighbors of $\vs$ are obtained by replacing the $j$th element of $\vs$ by $s'\in[n]$. And the \textit{full neighborhood} of $\vs$ is defined by $N(\vs) = \big(N_1(\vs),N_2(\vs),...,N_k(\vs)\big)$. Therefore, $k$-WL has $k$ fine-grained neighborhoods $N_j(\vs),j\in[k]$, and each fine-grained neighborhood has $n$ $k$-tuples.

$k$-FWL has a different definition of neighborhood. $k$-FWL defines the $j$th neighborhood of $\vs$ as 
\begin{align}
N^F_j(\vs)=\big((j,s_2,...,s_k),(s_1,j,...,s_k),...,(s_1,...,s_{k-1},j)\big).
\end{align}
And the full neighborhood of $\vs$ is given by $N^F(\vs) =\{\!\!\{N^F_j(\vs)\vert j\in [n]\}\!\!\}$.
That is, $k$-FWL has $n$ fine-grained neighborhoods $N^F_j(\vs), j\in [n]$, and the $j$th fine-grained neighborhood $N^F_j(\vs)$ is obtained by iteratively replacing each element of $\vs$ by node $j$. Essentially, $k$-WL and $k$-FWL have the same $nk$ neighbor tuples but differ in how these $nk$ tuples are \textbf{ordered and grouped}. They result in different expressive power between $k$-WL and $k$-FWL.
In previous work, $k$-WL and $k$-FWL's discriminating power for \textbf{graphs} has been studied. An important result is that $k$-FWL has equal graph discriminating power to $(k+1)$-WL which is strictly stronger than $k$-WL for $k\ge2$.


For link prediction, we care about the $k=2$ case. We use $c^{(t)}(\ve)$ to denote the representation of link $\ve:=(p,q) \in [n] \times [n]$ at iteration $t$. Then, $c^{(t)}(\ve)$ in $2$-WL and $2$-FWL is updated respectively by:
\begin{align}
c^{(t)}(\ve) &= f\Big(c^{(t-1)}(\ve), \{\!\!\{c^{(t-1)}(u,q)|u\in[n]\}\!\!\}, \{\!\!\{c^{(t-1)}(p,v)|v\in[n]\}\!\!\}\Big),\label{eq:2wl}\\
c^{(t)}(\ve) &= f_F\Big(c^{(t-1)}(\ve), \{\!\!\{\big(c^{(t-1)}(u,q), c^{(t-1)}(p,u)\big)|u\in[n]\}\!\!\}\Big),\label{eq:2fwl}
\end{align}
where $f, f^F$ are injective functions. For unlabeled graphs, we can take $c^{(0)}(\ve)$ to be the indicator of whether $\ve$ exists in $E$. For labeled graphs, we additionally consider the initial node labels (features).


We can directly notice that when the initial representation for link $(p, q)$ is its 1/0 edge indicator, $2$-FWL can learn to \textbf{count the common neighbors} between $p, q$ by checking how many $(1,1)$ appear in the multiset. By iterating the third node $u$, it can actually learn all 3-node structures containing $p,q$. In contrast, $2$-WL does not learn any 3-node structure and thus cannot count common neighbors.

GNNs based on $k$-WL and $k$-FWL have been studied for graph classification. In link prediction context, however, there is no previous work that systematically characterizes $2$-WL and $2$-FWL's discriminating power for \textbf{links}. 
To compare the link discriminating power of $1$-WL and different $2$-WL variants, we first formally define \textit{$1$-WL-indistinguishable} and \textit{$2$-WL-indistinguishable}.


\begin{definition}($1$-WL-indistinguishable)\label{def:1wlindistinguishable}
Let $G = (V, E, l)$, $G' = (V', E', l')$ be two graphs, and $\vs = (s_1, s_2,...,s_k)$, $\vs' = (s_1', s_2',...,s_k')$ be two equally sized node tuples, where $s_j \in V$, $s_j' \in V' ,~\forall~j\in [k]$. Let $c^{(t)}(i)$ denote the representation of node $i$ after $t$ steps of $1$-WL update. If
\begin{align}
c^{(t)}(s_j) = c^{(t)}(s_j’), ~~\forall j \in [k],~\forall t \geq 0,
\end{align}
we say $(\vs,G)$ is $1$-WL-indistinguishable from $(\vs', G')$, denoted by $(\vs,G) \simeq_{\textrm{1-WL}} (\vs', G')$.
\end{definition}

When $\vert \vs \vert=\vert \vs'\vert=2$, we say links $\vs$ and $\vs'$ are $1$-WL-indistinguishable.
When there is a bijective mapping $\pi \in V \rightarrow V'$ such that $(s,G) \! \simeq_{\textit{1-WL}} \! (\pi(s), G'),~\forall s\in V$, it reduces to the classical graph isomorphism testing case, and we say graphs $G$ and $G'$ are $1$-WL-indistinguishable. Note that when $\vert \vs \vert < n$, we are often more concerned with the case $G=G'$, where we aim to discriminate node tuples in the same graph.


\begin{definition}($2$-WL-indistinguishable)
Given graphs $G = (V,E,l)$, $G' = (V',E',l')$ and links $\ve = (p,q)\in V \times V$, $\ve' = (p',q')\in V' \times V'$, let $c^{(t)}(\ve)$ denote the representation of $\ve$ after $t$ steps of $2$-WL update. If 
\begin{align}
c^{(t)}(\ve)=c^{(t)}(\ve'),~\forall t\ge 0,
\end{align}
we say $(\ve, G)$ is $2$-WL-indistinguishable from $(\ve', G')$, denoted by $(\ve,G) \simeq_{\textrm{2-WL}} (\ve', G')$.
\end{definition}
Similarly, we can define indistinguishable for other $2$-WL variants that take links as message passing units. Note that for $2$-WL-indistinguishable, we only consider the link case, but it is possible to generalize $2$-WL-indistinguishable to arbitrary node tuples. Notice also that a link $\ve$ is exactly a 2-tuple $\vs$ in Definition~\ref{def:1wlindistinguishable}, which allows us to compare the link discriminating power between $1$-WL and $2$-WL tests. Below we formally define the relative link discriminating power.



\begin{definition}(Discriminating Power)
Given two tests $\mathscr{A}$ and $\mathscr{B}$, if $\mathscr{A}$ distinguishes $(\ve, G)$ and $(\ve', G')$ \textbf{only if} $\mathscr{B}$ distinguishes $(\ve, G)$ and $(\ve', G')$ for any $\ve, \ve', G, G'$, and there exists some $\ve_1, \ve'_1, G_1, G'_1$ such that $(\ve_1, G_1)$ is distinguishable from $(\ve'_1, G'_1)$ by $\mathscr{B}$ but not by $\mathscr{A}$, then we say test $\mathscr{B}$ has \textbf{stronger} link discriminating power than test $\mathscr{A}$, denoted by $\mathscr{A}\prec\mathscr{B}$. If $\mathscr{A}$ distinguishes $(\ve, G)$ and $(\ve', G')$ \textbf{if and only if} $\mathscr{B}$ distinguishes $(\ve, G)$ and $(\ve', G')$ for any $\ve, \ve', G, G'$, we say test $\mathscr{A}$ has \textbf{equivalent} link discriminating power to test $\mathscr{B}$, denoted by $\mathscr{A}\sim\mathscr{B}$.
\end{definition}

Given the above definition, we are now able to compare the expressive power between $1$-WL and $2$-WL (including its variants) for link prediction. 

\section{The power of $2$-WL tests for link prediction}
In this section we theoretically characterize the link discriminating power of different $2$-WL tests by comparing them with each other and $1$-WL. We summarize our results in Table~\ref{tab:powercompare}.


\subsection{$2$-WL and $2$-FWL tests have stronger link discriminating power than $1$-WL}

\begin{wrapfigure}[14]{l}{4.5cm}
\centering
\vspace{-13pt}
\includegraphics[height=2.6cm]{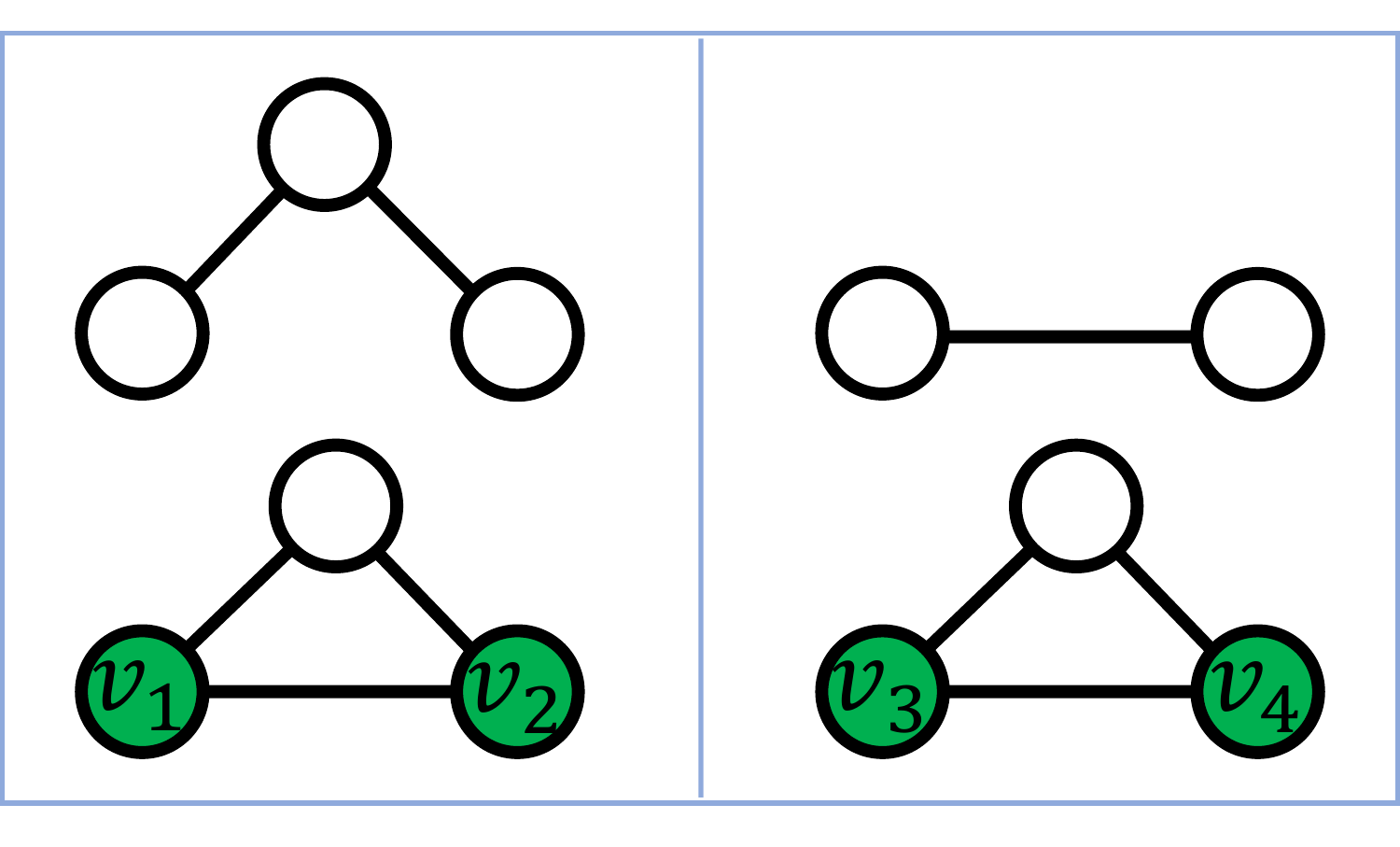}
\caption{Non-isomorphic links $(v_1,v_2)$ and $(v_3,v_4)$ from their respective graphs can be discriminated by $2$-WL but not by $1$-WL. $2$-WL can capture global features like graph size but $1$-WL only captures local structures.}
\label{fig:3}
\end{wrapfigure}


In this section, we use $2$-WL to specifically denote its plain version defined in Equation~(\ref{eq:2wl}), and $2$-FWL to denote the Folklore version defined in Equation~(\ref{eq:2fwl}). We have the following theorem. 

\begin{theorem}\label{thm:2wl}
$2$-WL has stronger link discriminating power than $1$-WL.
\end{theorem}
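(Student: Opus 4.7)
The plan is to verify both clauses of the definition of $\prec$ for $\mathscr{A}=1$-WL and $\mathscr{B}=2$-WL. First, I would show by induction on $t$ that the $2$-WL color of a $2$-tuple $(p,q)$ refines the pair of $1$-WL colors of its endpoints; concretely, that $c^{(t)}(p,q;G)=c^{(t)}(p',q';G')$ under the $2$-WL update forces $c^{(t)}_{\textrm{1-WL}}(p;G)=c^{(t)}_{\textrm{1-WL}}(p';G')$ and $c^{(t)}_{\textrm{1-WL}}(q;G)=c^{(t)}_{\textrm{1-WL}}(q';G')$. Second, I would exhibit an explicit pair of links separated by $2$-WL but not by $1$-WL, which together give $1$-WL $\prec$ $2$-WL.

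For the refinement, the base case is immediate: $c^{(0)}(p,q)$ encodes the edge indicator together with (in the labeled case) the initial labels of $p$ and $q$, and hence already determines $c^{(0)}_{\textrm{1-WL}}(p)$ and $c^{(0)}_{\textrm{1-WL}}(q)$. For the inductive step, the injectivity of the $2$-WL update $f$ lets us recover from $c^{(t)}(p,q)$ each of its three arguments, in particular the multiset $\{\!\!\{c^{(t-1)}(u,q)|u\in[n]\}\!\!\}$. Because $f$ always keeps the previous tuple color as an argument, iteratively unrolling extracts $c^{(0)}(u,q)$ and thus the edge indicator of $(u,q)$; by the inductive hypothesis each entry of this multiset also determines $c^{(t-1)}_{\textrm{1-WL}}(u)$ and $c^{(t-1)}_{\textrm{1-WL}}(q)$. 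Filtering the multiset to the entries with $(u,q)\in E$ therefore recovers $\{\!\!\{c^{(t-1)}_{\textrm{1-WL}}(u)|u\in N(q)\}\!\!\}$, which, combined with $c^{(t-1)}_{\textrm{1-WL}}(q)$, is precisely the input of a single $1$-WL update at $q$ and so determines $c^{(t)}_{\textrm{1-WL}}(q)$. The symmetric argument through the other multiset delivers $c^{(t)}_{\textrm{1-WL}}(p)$. Applied at every $t\ge 0$, this shows $(\ve,G)\simeq_{\textrm{2-WL}}(\ve',G')$ implies $(\ve,G)\simeq_{\textrm{1-WL}}(\ve',G')$, which is the ``only if'' direction.

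For the strict direction, I would take two $2$-regular graphs of different orders, e.g.\ $G_1=C_3$ and $G_2=C_4$, and let $\ve,\ve'$ be arbitrary edges of $G_1$ and $G_2$ respectively. Every node has degree $2$ with the same initial label, so a trivial induction shows the $1$-WL color trajectory of every node in $G_1\cup G_2$ is the same constant sequence, giving $(\ve,G_1)\simeq_{\textrm{1-WL}}(\ve',G_2)$. On the $2$-WL side, however, the multiset $\{\!\!\{c^{(0)}(u,q)|u\in[n]\}\!\!\}$ has cardinality $n$, which differs between the two graphs, so a single $2$-WL iteration already assigns distinct colors to $\ve$ and $\ve'$.

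The main obstacle is the bookkeeping in the inductive step: I must make sure that the initial edge indicator of each $(u,q)$ is recoverable from $c^{(t-1)}(u,q)$ (handled by the injective $f$ retaining $c^{(t-1)}$ as an argument, so the recursion unrolls down to $c^{(0)}$) and treat the diagonal entry $u=q$ of the multiset correctly. In a simple graph $q\notin N(q)$, so filtering by the edge indicator automatically drops the diagonal; in the self-loop or labeled setting one adopts the standard convention that $c^{(0)}$ also records whether the tuple is diagonal, which keeps the diagonal cleanly separated from off-diagonal entries.
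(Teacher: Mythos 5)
Your proposal is correct, but it takes a genuinely different route from the paper. The paper proves this theorem as a corollary: it first establishes Theorem~\ref{thm:local2wl} ($2$-WL$_L \sim 1$-WL) via an inductive bijection between the $1$-WL subtree of a node pair and the $2$-WL$_L$ subtree rooted at the corresponding $2$-tuple, then observes that the global neighborhood of $2$-WL contains the local one (so $2$-WL refines $2$-WL$_L$, hence $1$-WL), and finally invokes the counterexample of Figure~\ref{fig:3}. You instead prove the refinement directly, by induction on $t$, showing that $c^{(t)}(p,q)$ determines $c^{(t)}_{\textrm{1-WL}}(p)$ and $c^{(t)}_{\textrm{1-WL}}(q)$; the key move---recovering the edge indicator of each $(u,q)$ from its tuple color by unrolling the injective update to $c^{(0)}$, and filtering the global multiset $\{\!\!\{c^{(t-1)}(u,q)\mid u\in[n]\}\!\!\}$ down to $\{\!\!\{c^{(t-1)}_{\textrm{1-WL}}(u)\mid u\in N(q)\}\!\!\}$---is exactly the mechanism that silently underlies the paper's step ``the global neighborhood is a superset of the local one, so $2$-WL refines $2$-WL$_L$,'' so you have made explicit something the paper leaves as an assertion. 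Your route is more self-contained and elementary; the paper's route buys the equivalence $2$-WL$_L\sim 1$-WL as a reusable lemma that it needs anyway for Table~\ref{tab:powercompare}, which your argument does not yield (you only get one direction of that equivalence). Your counterexample ($C_3$ vs.\ $C_4$, separated by the cardinality of the global neighbor multiset) is the same graph-size idea as Figure~\ref{fig:3} and is valid; the only caveat is the paper's standing convention (stated in the appendix) that the target link's edge indicator is masked to zero---if one additionally treats the target edge as removed from the observed graph, $C_3$ and $C_4$ become paths that $1$-WL can separate, so to be safe choose the target pairs to be non-adjacent node pairs in two cycles of different lengths, for which the argument goes through unchanged.
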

The theorem is immediately proved given Theorem~\ref{thm:local2wl} (which will be discussed later) and the example displayed in Figure~\ref{fig:3}. The theorem indicates that any two links that can be distinguished by $1$-WL can also be distinguished by $2$-WL, while the inverse direction is not true. It is known that $2$-WL and $1$-WL have the same \textbf{graph} discriminating power. In link prediction, however, $2$-WL is strictly stronger than $1$-WL because its neighborhood scope is global so that it can capture graph structure unconnected to the target link but $1$-WL only captures local neighborhood. However, as the two branches of neighboring links $\{\!\!\{(u,q)|u\in[n]\}\!\!\}, \{\!\!\{(p,v)|v\in[n]\}\!\!\}$ from $(p,q)$ are still independently aggregated, $2$-WL still cannot discriminate links like $(v_1,v_2)$ and $(v_1,v_3)$ in Figure~\ref{fig:1} or count common neighbors. Next, we characterize $2$-FWL's discriminating power.

\begin{theorem}\label{thm:2fwl}
$2$-FWL has stronger link discriminating power than $2$-WL.
\end{theorem}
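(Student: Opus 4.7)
The statement has two ingredients that I would prove separately: (i) the refinement direction, that any link pair separated by $2$-WL is also separated by $2$-FWL, and (ii) a concrete witness pair separated by $2$-FWL but not by $2$-WL.

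For (i), I would show by induction on $t$, uniformly over all $2$-tuples in all graphs, that $c^{(t)}_{\textnormal{2-FWL}}(\ve,G) = c^{(t)}_{\textnormal{2-FWL}}(\ve',G')$ implies $c^{(t)}_{\textnormal{2-WL}}(\ve,G) = c^{(t)}_{\textnormal{2-WL}}(\ve',G')$. The base case is immediate since both tests share the initial edge indicator. The inductive step rests on a single observation: the $2$-FWL multiset $\{\!\!\{(c^{(t-1)}(u,q),c^{(t-1)}(p,u)) \mid u\in[n]\}\!\!\}$ determines both marginal multisets $\{\!\!\{c^{(t-1)}(u,q) \mid u \in [n]\}\!\!\}$ and $\{\!\!\{c^{(t-1)}(p,u) \mid u \in [n]\}\!\!\}$, and the inductive hypothesis gives a well-defined map $\phi$ from $(t{-}1)$-step $2$-FWL colors to $(t{-}1)$-step $2$-WL colors. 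Applying $\phi$ elementwise to these marginals, together with $\phi(c^{(t-1)}_{\textnormal{2-FWL}}(\ve))$, produces exactly the three arguments of the $2$-WL update, so equal $2$-FWL colors force equal $2$-WL colors at step $t$.

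For (ii), the candidate witness is $G_1=K_3\sqcup K_3$ versus $G_1'=C_6$, both $2$-regular on six vertices, with $\ve$ and $\ve'$ arbitrary edges. In $G_1$ the endpoints of $\ve$ share exactly one common neighbor (the third vertex of their triangle), while in $G_1'$ they share none. Consequently the $2$-FWL multiset for $\ve$ contains the pair $(\textnormal{edge},\textnormal{edge})$ once while that of $\ve'$ does not, and $2$-FWL separates them after a single iteration. To show $2$-WL fails at every $t$, I would exploit edge- and vertex-transitivity of both graphs together with a stability computation at $t=1$ which shows that the distance-$2$ and distance-$3$ non-edges of $C_6$ receive the same color. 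This collapses the $2$-WL partition of each graph into three matching classes---edges, non-edges, and self-loops---with identical sizes and class-to-class incidence counts across $G_1$ and $G_1'$. A short induction then propagates this alignment to every subsequent step, because each $2$-WL update depends only on a tuple's current class and on the multiset of neighbor classes it sees in each branch, and both quantities agree across the two graphs.

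The main obstacle I anticipate is in (ii): direction (i) reduces to the marginal observation plus a routine induction, but the stability argument in (ii) requires verifying carefully that $2$-WL cannot split the two flavors of $C_6$ non-edges at any later step. I expect this to follow cleanly from matching class-incidence counts in $G_1$ and $G_1'$, but the bookkeeping must be executed precisely to close the induction.
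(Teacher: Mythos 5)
Your direction (i) is correct and is in substance the paper's own argument: the paper proves the refinement via unfolding trees $T_{\mathscr{C}}$, $T_{\mathscr{D}}$ and an induction on depth, and the key step there (passing from its equation (26) to (27)--(28)) is exactly your observation that the $2$-FWL multiset of pairs determines both marginal multisets; your color-map formulation with $\phi$ is a clean, equivalent rephrasing and closes correctly because $f_F$ is injective.

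The gap is in direction (ii), and it is concrete. The paper's appendix opens with a presumption that applies to \emph{all} of its theorems: the target link's edge-existence indicator is set to zero, since in the link-prediction setting one does not know whether the target link exists. Your witness takes $\ve$ and $\ve'$ to be \emph{edges} of $K_3\sqcup K_3$ and $C_6$. Under the masking convention these configurations are effectively $P_3\sqcup K_3$ with the two path-endpoints as targets versus $P_6$ with its two endpoints as targets, and these are separated already by $1$-WL (hence by $2$-WL, which by Theorem~\ref{thm:local2wl} subsumes $1$-WL on links): the middle vertex $w$ of the masked triangle is adjacent to two degree-one vertices, whereas no vertex of $P_6$ is. One can see exactly where your stability induction breaks: after one round the rows and columns of $p,q$ (resp.\ $1,2$) become ``special,'' and at the next round the column of $w$ contains two edge-entries pointing into $\{p,q\}$ while the column of vertex $6$ contains only one, so $(p,w)$ and $(1,6)$ split, and hence $(p,q)$ and $(1,2)$ split one round later. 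So the induction you flag as the anticipated obstacle genuinely fails to close for edge targets under the paper's convention (it does close if no masking is applied, i.e., for the theorem read literally against the main-text definitions, but the paper explicitly adopts masking for all its theorems).

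The repair is easy and keeps your graphs: take the targets to be \emph{non-edges}, e.g.\ $\ve$ a cross-triangle pair of $K_3\sqcup K_3$ and $\ve'$ a distance-two pair of $C_6$. Masking is then vacuous, both graphs remain $2$-regular and pair-transitive on each of the three classes $\{$diagonal, edge, non-edge$\}$, every row and column has the same profile (one diagonal, two edges, three non-edges), so that partition is $2$-WL-stable in both graphs and the two targets are $2$-WL-indistinguishable forever; meanwhile $2$-FWL separates them in one round because $\ve$ has zero common neighbors and $\ve'$ has one, i.e., its pair multiset contains no $(\textnormal{edge},\textnormal{edge})$ entry for $\ve$ but one such entry for $\ve'$. (The paper itself uses a different figure-based counterexample chosen to be compatible with the masking presumption.)
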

We include the proof in the appendix. From this theorem, we can also derive that $1$-WL $\prec$ $2$-FWL.
Different from the $2$-WL case, $2$-FWL has fundamentally stronger link discriminating power than both $2$-WL and $1$-WL because it can learn three-node structures (such as common neighbors). 


\subsection{The link discriminating power of local $2$-WL and local $2$-FWL}
Compared to $1$-WL, $2$-WL and $2$-FWL have higher link discriminating power by performing message passing between high-order substructures. However, they also bring higher time and space complexity. Given a graph $G = (V, E)$ where $|V|=n$ and $|E|=m$, $1$-WL takes $O(m)$ time complexity in each iteration and occupies $O(n)$ memory. When $k\ge 2$, $k$-WL's space and time complexity grow in a polynomial rate as $O(n^k)$ and $O(kn^{k+1})$ due to storing the representations of all $n^k$ $k$-tuples and passing messages from all $kn$ neighbors for each $k$-tuple. For $2$-WL, it requires $O(n^2)$ memory and $O(n^3)$ time for each iteration, which is unaffordable for large-scale graphs.

To leverage the graph sparsity and reduce the complexity, we propose \textit{local $2$-WL}, denoted by $2$-WL$_L$. Local $2$-WL reduces the neighborhood scope in $2$-WL \textbf{from global to local}: only links that are edges in the observed graph are counted as neighbors of the current link in each iteration. The neighborhood of node pair $\ve = (p, q)$ in $2$-WL$_L$ is defined as
\begin{align}
N(\ve) = (\{\!\!\{ (u, q)~|~(u, q)\in E, u\in [n] \}\!\!\}, \{\!\!\{ (p, v)~|~(p, v)\in E, v \in [n] \}\!\!\} ).\label{eq:local2wlneighbor}
\end{align}

The following theorem characterizes local $2$-WL's discriminating power.

\begin{theorem}\label{thm:local2wl}
$2$-WL$_L$ has equivalent link discriminating power to $1$-WL.
\end{theorem}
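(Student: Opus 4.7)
The plan is to establish the equivalence by showing, by induction on the iteration $t$, that the local $2$-WL color of a link $(p,q)$ and the pair of $1$-WL colors of its endpoints (augmented by the edge indicator $\mathbf{1}[(p,q)\in E]$, which is the initial value of $c^{(0)}_{2\text{-WL}_L}$ in the unlabeled case) carry exactly the same information at every step. This mutual determination will yield equivalence of link discriminating power in both directions.

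For the direction $1\text{-WL}\preceq 2\text{-WL}_L$, I would show that $c^{(t)}_{2\text{-WL}_L}(p,q)$ determines the triple $\bigl(c^{(t)}_{1\text{-WL}}(p),\, c^{(t)}_{1\text{-WL}}(q),\, \mathbf{1}[(p,q)\in E]\bigr)$. The base case $t=0$ is direct from the definition of $c^{(0)}_{2\text{-WL}_L}$. In the inductive step, injectivity of the update $f$ in Equation~(\ref{eq:2wl}) lets us recover from $c^{(t+1)}_{2\text{-WL}_L}(p,q)$ both the previous link color and the two neighbor multisets $\{\!\!\{c^{(t)}_{2\text{-WL}_L}(u,q) : u\in N(q)\}\!\!\}$ and $\{\!\!\{c^{(t)}_{2\text{-WL}_L}(p,v) : v\in N(p)\}\!\!\}$, where the restriction to actual graph neighbors is built into definition~(\ref{eq:local2wlneighbor}). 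Applying the inductive hypothesis to each entry of the first multiset extracts $\{\!\!\{c^{(t)}_{1\text{-WL}}(u) : u\in N(q)\}\!\!\}$ and the common value $c^{(t)}_{1\text{-WL}}(q)$; feeding these through the $1$-WL update recovers $c^{(t+1)}_{1\text{-WL}}(q)$, and symmetrically we recover $c^{(t+1)}_{1\text{-WL}}(p)$, while the edge indicator is preserved throughout.

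For the reverse direction $2\text{-WL}_L\preceq 1\text{-WL}$, I would prove by induction the converse: the triple $\bigl(c^{(t)}_{1\text{-WL}}(p),\, c^{(t)}_{1\text{-WL}}(q),\, \mathbf{1}[(p,q)\in E]\bigr)$ determines $c^{(t)}_{2\text{-WL}_L}(p,q)$. The key observation is that the edge-restricted neighborhood in~(\ref{eq:local2wlneighbor}) collapses the link neighborhood of $(p,q)$ to the ordinary graph neighborhoods of its endpoints. By the inductive hypothesis, each neighbor link color $c^{(t)}_{2\text{-WL}_L}(u,q)$ with $u\in N(q)$ is a function of $(c^{(t)}_{1\text{-WL}}(u), c^{(t)}_{1\text{-WL}}(q), 1)$, so the neighbor multiset is determined by $\{\!\!\{c^{(t)}_{1\text{-WL}}(u) : u\in N(q)\}\!\!\}$ together with $c^{(t)}_{1\text{-WL}}(q)$. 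Injectivity of the $1$-WL hash then lets us recover this pair from $c^{(t+1)}_{1\text{-WL}}(q)$; applying the symmetric step for $p$ and recombining via $f$ expresses $c^{(t+1)}_{2\text{-WL}_L}(p,q)$ as a function of the triple at step $t+1$.

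The main obstacle is the bookkeeping in the second direction: one has to use injectivity of the $1$-WL hash to trade a ``$t$-step neighbor multiset at $q$'' for ``$(t{+}1)$-step endpoint color at $q$'', so that the inductive invariant stays parameterized only by the current endpoint colors and the edge indicator, not by their full histories. It is also worth emphasizing why the argument is specific to the local variant: in plain $2$-WL, the neighbor multisets range over all of $[n]$ and include non-edges whose link representations depend on strictly more than endpoint-level $1$-WL data, which is exactly why Theorem~\ref{thm:2wl} yields a strict separation in that case.
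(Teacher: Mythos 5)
Your proof is correct and follows essentially the same route as the paper's: the paper phrases the argument as a depth-$k$ induction establishing an isomorphism between the unfolding trees $T_{\mathscr{A}}$ of $2$-WL$_L$ and $T_{\mathscr{B}}$ of $1$-WL, whereas you phrase it as an iteration-$t$ induction showing the colors mutually determine each other, but both rest on the identical key observation that the two branches of the local neighborhood in Equation~(\ref{eq:local2wlneighbor}) are exactly the $1$-WL neighborhoods of $q$ and $p$. One point to make explicit: your invariant really establishes equivalence between $2$-WL$_L$ and the pair of $1$-WL colors \emph{augmented by} $\mathbf{1}[(p,q)\in E]$, so to land on the theorem's literal claim you need the paper's stated convention that the target link's existence indicator is masked to a fixed value (otherwise $2$-WL$_L$ would trivially separate observed from unobserved pairs that $1$-WL cannot).
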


The whole proof is included in the appendix. The main idea is to establish an bijective mapping between the subtrees of $1$-WL and those of $2$-WL$_L$. Intuitively, the two neighborhoods of $\ve= (p,q)$ in Equation~(\ref{eq:local2wlneighbor}) exactly correspond to the neighborhood of $q$ and $p$ in $1$-WL, respectively.


The above theorem establishes an interesting connection between $2$-WL and $1$-WL. We know that $2$-WL$_L$'s neighborhood is a subset of that of $2$-WL. Thus, when $2$-WL$_L$ can discriminate two links, $2$-WL can also discriminate them. Combining with the counterexample in Figure~\ref{fig:3}, we can derive Theorem~\ref{thm:2wl}. The lower power of $2$-WL$_L$ and $1$-WL than $2$-WL is rooted in their inability to detect unconnected nodes. Compared to $2$-WL with a global neighborhood definition, $2$-WL$_L$ and $1$-WL adopt local neighborhood definitions, which takes more iterations to detect long-range patterns and can never detect unconnected structures. Despite the loss of discriminating power, local $2$-WL largely reduces the time and space complexity. Denote $m'$ as the number of unknown links to predict, $2$-WL$_L$ takes $O(m+m')$ space and $O((m+m')d)$ time per iteration, where $d$ is the average node degree. 


We also propose \textit{local $2$-FWL}, denoted by $2$-FWL$_L$. Figure~\ref{fig:2fwll} gives an illustration. Given the observed graph $G=(V, E)$, we define the neighborhood of $\ve=(p, q)$ in $2$-FWL$_L$ as:
\begin{align}
N^F(\ve) = \{\!\!\{((u, q),(p, u))~|~(u, q)\in E\ \text{or}\ (p, u)\in E,~u\in [n]\}\!\!\}.
\end{align}
That is, we only keep those three-node structures $((u, q),(p, u))$ which have at least one edge existent in $G$. Therefore, $n^2$ node pairs each only need to aggregate messages from at most $2d$ three-node structures, which results in a space complexity of $O(n^2)$ and time complexity of $O(n^2d)$. Although not reducing the space complexity, $2$-FWL$_L$ significantly reduces the time complexity of $2$-FWL.



Next, we characterize the expressive power of $2$-FWL$_L$. We first compare it with $2$-WL$_L$.
\begin{theorem}\label{thm:local2fwl}
$2$-FWL$_L$ has stronger link discriminating power than $2$-WL$_L$.
\end{theorem}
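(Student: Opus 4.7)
The plan is to establish two facts: (a) every pair of links distinguished by $2$-WL$_L$ is also distinguished by $2$-FWL$_L$, and (b) there is a concrete pair of links separated by $2$-FWL$_L$ but not by $2$-WL$_L$. Together these yield $2$-WL$_L \prec 2$-FWL$_L$.

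For (a), I will argue by induction on the iteration count $t$ that the $2$-FWL$_L$ coloring refines the $2$-WL$_L$ coloring, i.e., $c^{(t)}_{\textrm{2-FWL}_L}(\ve) = c^{(t)}_{\textrm{2-FWL}_L}(\ve')$ implies $c^{(t)}_{\textrm{2-WL}_L}(\ve) = c^{(t)}_{\textrm{2-WL}_L}(\ve')$. The base case is immediate because both tests share the edge-indicator initialization $c^{(0)}$. For the inductive step with $\ve=(p,q)$ and $\ve'=(p',q')$, equality of $2$-FWL$_L$ colors at step $t$ gives, by injectivity of $f_F$, equality of the center colors at step $t{-}1$ (which lifts to equality of the $2$-WL$_L$ center colors by the inductive hypothesis) together with a bijection $\pi$ between the local pair neighborhoods matching $2$-FWL$_L$ colors componentwise. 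Since $c^{(t-1)}$ refines $c^{(0)}$, each component color determines the edge-indicator, so I can filter the pair multiset to the sub-multiset whose first component has edge-indicator $1$; projecting onto the first coordinate and invoking the inductive hypothesis (to translate $2$-FWL$_L$ colors into $2$-WL$_L$ colors) recovers exactly the $q$-side multiset in the $2$-WL$_L$ update rule. A symmetric filter on the second coordinate recovers the $p$-side multiset. Substituting both multisets and the equal center color into the $2$-WL$_L$ update yields the required equality at step $t$.

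For (b), I will use the minimal example of a triangle $G$ on $\{v_1,v_2,v_3\}$ and a $4$-cycle $G'$ on $\{w_1,w_2,w_3,w_4\}$, with target links $\ve=(v_1,v_2)$ and $\ve'=(w_1,w_2)$. Both graphs are $2$-regular, so every node receives the same $1$-WL color in every iteration, and any two edges are $1$-WL-indistinguishable; by Theorem~\ref{thm:local2wl} the same holds for $2$-WL$_L$. On the other hand, the first-iteration $2$-FWL$_L$ neighborhood of $\ve$ contains the pair $\bigl((v_3,v_2),(v_1,v_3)\bigr)$ with both components being edges, giving a $(1,1)$ edge-indicator pair in the multiset, whereas for $\ve'$ no $u\in[n]$ makes both $(u,w_2)$ and $(w_1,u)$ edges, so no $(1,1)$ pair appears. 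Hence a single $2$-FWL$_L$ step already separates $\ve$ from $\ve'$. Intuitively, the separator is the common-neighbor count, which $2$-FWL can detect but $1$-WL cannot.

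The step I expect to require the most care is the locality mismatch in the induction: the $2$-FWL$_L$ index set $\{u : (u,q)\in E \text{ or } (p,u)\in E\}$ strictly contains each $2$-WL$_L$ branch index set $\{u:(u,q)\in E\}$ and $\{u:(p,u)\in E\}$. One must verify that every $u$ relevant to a $2$-WL$_L$ branch is present in the $2$-FWL$_L$ multiset (which is automatic since either edge condition alone implies the disjunction) and that the filter ``$(u,q)\in E$'' can be read off from the iteration-$(t{-}1)$ color of the first component rather than from the raw graph, so that the filtering step is intrinsic to the $2$-FWL$_L$ coloring rather than an external oracle.
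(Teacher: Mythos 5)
Your proposal is correct and follows essentially the same route as the paper: a one-direction simulation argument (the paper phrases it as an induction on the depth of unrolled subtrees, you phrase it as an induction showing the $2$-FWL$_L$ coloring refines the $2$-WL$_L$ coloring, which is the same content) together with a common-neighbor counterexample. The paper simply defers to its proof of Theorem~\ref{thm:2fwl} restricted to $p,q$ and their neighbors and reuses the counterexample of Figure~\ref{fig:4}, whereas you make the locality bookkeeping explicit (filtering the $2$-FWL$_L$ pair multiset by the edge indicators recoverable from the component colors) and supply a self-contained counterexample (triangle versus $4$-cycle, both $2$-regular, hence $1$-WL-indistinguishable and, by Theorem~\ref{thm:local2wl}, $2$-WL$_L$-indistinguishable, yet separated in one $2$-FWL$_L$ step by the $(1,1)$ pair witnessing a common neighbor).
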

The rationale is still that $2$-FWL$_L$ can count common neighbors while $2$-WL$_L$ cannot. Both $2$-WL and $2$-WL$_L$ treat their two branches of neighborhoods independently, which fails to learn the interaction between the two branches. In contrast, $2$-FWL and $2$-FWL$_L$ first group neighbor links by the shared nodes $u$, thus capturing higher-order information (e.g., three-node structures) than $2$-WL and $2$-WL$_L$.

Combining with Theorem~\ref{thm:local2wl}, we also have the conclusion that $2$-FWL$_L$ is stronger than $1$-WL for link discriminating. Furthermore, since $2$-FWL's global neighborhood is a superset of that of $2$-FWL$_L$, we have that $2$-FWL has stronger link discriminating power than $2$-FWL$_L$.

\begin{wraptable}[11]{r}{8cm}\small
    \setlength\tabcolsep{3pt}
	\centering
	\vspace{-10pt}
	\begin{tabular}{c|ccccc}
	&$1$-WL& $2$-WL$_L$&$2$-WL&$2$-FWL$_L$&$2$-FWL\\
	\hline
	$1$-WL& $\sim$&$\sim$&$\prec$&$\prec$&$\prec$\\
	$2$-WL$_L$&& $\sim$&$\prec$&$\prec$&$\prec$\\
	$2$-WL&&& $\sim$&-&$\prec$\\
	$2$-FWL$_L$&&&& $\sim$&$\prec$\\
	$2$-FWL&&&&& $\sim$\\
	\end{tabular}
	\caption{\label{tab:powercompare}The upper-triangular matrix shows relative link discriminating power of different tests, where $\sim$ denotes equal power, $\prec$ denotes weaker power, and - denotes that both are not stronger than the other.}
\end{wraptable}
Summarizing all previous results, we depict a full picture of the relative link discriminating power of all the tests in Table~\ref{tab:powercompare}. In general, the original $2$-WL tests are stronger than their local versions, and the Folklore versions ($2$-FWL and $2$-FWL$_L$) are stronger than the plain versions. All $2$-WL tests except the local $2$-WL are stronger than $1$-WL. Although the local versions are less powerful, they bring significant complexity reduction, as well as possibly more robustness and better generalizability for link prediction due to their focus on local structure patterns. Our experiments verify that local versions are usually not worse.



\section{Implementation by GNN models}\label{sec:implementation}

By implementing the injective update functions $f$ with MLPs, GNN can approach the expressive power of $1$-WL test to an arbitrary degree with enough layers and parameters~\citep{xu2018powerful}. Moreover, the update functions in GNN have learnable parameters, which allows better adaptability and generalizability. Thus, we implement our proposed $2$-WL tests for link prediction through GNNs.



\subsection{GNN implementation of $2$-WL}
For plain $2$-WL, 
we first use a $1$-WL-GNN to learn node embeddings with the raw node features, which is inspired by \citep{morris2019weisfeiler}.
If there are no raw node features, we take embeddings of node degrees to keep the inductive property of our model. Then, we obtain the initial link representations by pooling the pairwise node embeddings. 

One way to implement $2$-WL is to construct a complete graph where each node corresponds to a node pair (link) in the original graph, and then apply traditional graph convolutions. However, such an approach is unaffordable in most cases due to the $O(n^3)$ edges in the new graph. Therefore, we construct our own aggregation and combination functions. We group the link representations in the $t^{\text{th}}$ step into an $n \times n \times d$ tensor $A^{(t)}$, where the $p,q$ indexed vector $A^{(t)}_{p,q,:}$ is the representation of link $(p,q)$. For $A^{(0)}$, we also include the adjacency matrix as one slice. Then, our aggregation function and combination function are
\begin{align}
    B^{(t)}_{p,q,:} = concat\big(\sum_{i\in[n]}g(A^{(t)}_{p,i,:}),  \sum_{i\in[n]}h(A^{(t)}_{i,q,:})\big),~~~~\text{(Aggregation)}\\
    A^{(t+1)}=f\big(concat(B^{(t)}, A^{(t)})\big),~~~~\text{(Combination)}
\end{align}
where $f, g, h$ are MLPs. 
Given the ordered node pair $(p, q)$, in each layer we apply two distinct transformations $g$ and $h$ to respectively aggregate its neighbors $\{\!\!\{(u,q)|u\in[n]\}\!\!\}$,  $\{\!\!\{(p,v)|v\in[n]\}\!\!\}$. Directly operating on the dense link representations $A$ saves us from explicitly constructing the complete graph, and allows using standard GPU-based batch matrix multiplication to implement our graph convolution.
In the last layer, we pool $(p, q)$ and $(q, p)$'s representations to obtain the representation for the undirected link $\{p, q\}$.

\subsection{GNN implementation of $2$-WL$_L$}
Local $2$-WL is realised differently from $2$-WL. Due to the reduced neighborhood, we can leverage the graph sparsity to save memory and time. In each training episode, let $S$ be the mini-batch containing all positive and negative target links to predict, $E'$ be the existing edges in the original graph (after removing the positive training links). Then we construct a second-order graph $G_S := (E' \cup S, E^{(2)})$, where $E'$ and $S$ become nodes and $E^{(2)}$ denotes the edges between $E' \cup S$ based on the neighborhood definition of $2$-WL$_L$. We then apply a $1$-WL-GNN on the second-order graph $G_S$ to obtain node representations for $S$ which are used to output their link prediction scores in the original graph.
The second-order graph has $O( (|E'| + |S|) d)$ edges, where $d$ is the average node degree in the original graph. Therefore, the time complexity of message passing follows to be $O( (|E'| + |S|) d)$.
Memory efficiency is also largely improved because we only need to save $O( |E’| + |S|)$ representations.

\subsection{GNN implementation of $2$-FWL}
The situation becomes a bit more complex for $2$-FWL. The join of two links is difficult to implement by standard graph convolution layers. Thus, we apply a model similar to that proposed in \citet{maron2019provably}. In each layer, we apply slice-wise matrix multiplication of two reshaped link representation tensors to implement the $2$-FWL message passing. 
\begin{align}\label{eq:2fwlgnn}
    B^{(t)}_{p,q,:} = \sum_{i\in[n]}g(A^{(t)}_{p,i,:}) \odot h(A^{(t)}_{i,q,:}),
\end{align}
where $\odot$ is element-wise product and $g,h$ are MLPs with the same output dimension. The above implementation first joins link representations of $(p,i)$ and $(i,q)$ through element-wise product, and then performs the aggregation through summing. 
Intuitively, matrix multiplication of two adjacency matrices $AA^T$ recovers the common neighbor matrix. We also add adjacency matrix into $A^{(0)}$.


\subsection{GNN implementation of $2$-FWL$_L$}
For local $2$-FWL, we replace the dense matrix multiplications in Equation~(\ref{eq:2fwlgnn}) with sparse matrix multiplications, i.e., initially only those entries $A^{(0)}_{p,q,:}$ corresponding to existing edges $(p,q) \in E$ have nonzero values, and at the $t^{\text{th}}$ message passing step we still only track those $p,q$ entries reachable from each other in $t$ steps of random walk. Note that this implementation slightly loses the representation power because we do not learn representations for all (intermediate) links. Thus, we concatenate the final link representations with node-pair representations learned by a $1$-WL-GNN to give a nonzero representation to any link. Although this implementation does not preserve the full representation power of $2$-FWL$_L$, it can still learn common neighbor and path-counting features between nodes, and most importantly, it significantly reduces the space complexity.


\section{Related Work}
Weisfeiler-Lehman tests are a family of algorithms to deal with the graph isomorphism problem~\citep{cai1992optimal}. In addition to graph isomorphism checking, they have found many applications in machine learning recently~\citep{morris2021weisfeiler}. \citet{shervashidze2011weisfeiler} use the idea to construct subtree-based graph kernels. \citet{niepert2016learning} and \citep{zhang2017weisfeiler} use WL to sort nodes and construct neural networks for graphs.
Vanilla GNNs have also been shown to have limited graph discriminating power bounded by $1$-WL~\citep{xu2018powerful}. Many works focus on how to improve GNNs' power by considering high-dimensional WL tests. \citet{morris2019weisfeiler} introduce GNN models simulating $2$-WL and $3$-WL tests. \citet{maron2019provably,chen2019equivalence} achieve the same graph discriminating power as $3$-WL with a $2$-FWL based model. However, these works all deal with the whole-graph representation learning problem. Little work has been done in the link prediction context. In this work, we for the first time demonstrate both the theoretical and practical power of $2$-WL-based GNNs for link prediction, therefore filling in this blank area.

In the community of using GNN models for link-oriented tasks, various techniques have been proposed to enhance their theoretical power. SEAL~\citep{zhang2018link} utilizes a distance-based node labeling trick to label the context nodes according to their relationships to the target link, which is later formalized into distance encoding~\citep{li2020distance}. \citet{zhang2021labeling} further proved that such a labeling trick brings theoretical improvement to GNNs' link discriminating power. 
However, using labeling tricks requires extracting a subgraph for each link and repeatedly applying GNN to the subgraphs, which incurs high computational complexity and prevents full-batch learning. In contrast, our models aim to still apply GNN only once to the entire graph like the traditional GAE methods, while outperforming GAE in terms of link discriminating power. NBFNet~\citep{zhu2021neural} uses a type of partial labeling trick which only labels the source node and applies a GNN to predict all links from the source node. Although it does not need to extract a subgraph for every link, it needs to apply a GNN to a large graph for each source node and suffers from low training efficiency. 
On the basis of SEAL, \citet{pan2021neural} encode a transition matrix serving as a form of pairwise encoding for each link in the subgraph. However, it still requires extracting subgraphs for all links to predict.


Given original graph $G$, line graph $L(G)$ represents the adjacency between edges. In $L(G)$, each node corresponds to a unique edge in $G$. By using node representation learning methods~\citep{kipf2016semi} on the line graph, some methods~\citep{LineGraphRel, LineGraphCommDetect, LineGraphCensNet, LineGraphLP,liu2021indigo} can utilize edge features and topology better, which have achieved outstanding performance on graph tasks like heterogeneous graph learning, community detection, graph classification, and link prediction. Using $1$-WL-GNNs on line graphs is similar to local $2$-WL. However, none of these previous works have noticed the connection between line graph and $2$-WL tests. Furthermore, more expressive variants like $2$-FWL are not studied in previous works.


\section{Experiments}

In this section, we conduct experiments to verify the effectiveness of $2$-WL-GNNs for link prediction. We test $2$-WL-GNNs based on the proposed four tests: $2$-WL, local $2$-WL ($2$-WL$_L$), $2$-FWL, and local $2$-FWL ($2$-FWL$_L$). The performance metric is area under the ROC curve (AUC). For each dataset, we run each model for 10 times and report the average performance and standard deviations. Hyperparameters include learning rate, hidden dimension, number of message passing layers, and dropout rate. Baseline results are taken from \citep{zhang2018link} and \citep{zhu2021neural}. 

The baseline methods we choose are Matrix Factorization (MF)~\citep{mnih2008probabilistic}, Node2Vec (N2V)~\citep{grover2016node2vec}, Weisfeiler-Lehman Neural Machine (WLNM)~\citep{WLNM}, TLC-GNN~\citep{TLCGNN}, $1$-WL-GNNs including VGAE~\citep{VGAE} and S-VGAE~\citep{SVGAE}, and labeling trick methods including SEAL~\citep{zhang2018link} and NBFNet~\citep{zhu2021neural}. We use eleven benchmark datasets. Three of them are citation networks with node feature information: Cora, CiteSeer and Pubmed~\citep{homodata}. The other eight datasets are: USAir, NS, PB, Yeast, C.ele, Power, Router, and E.coli from SEAL, which are networks from different domains and do not contain node features.
For each network, we randomly choose $10\%$ edges as test set and $5\%$ edges as validation set. The remaining are treated as the observed training graph. The same number of randomly sampled nonexistent links are added into each set as the negative data. The results are presented in Table~\ref{tab:AUC1} and \ref{tab:AUC2}.


\begin{table}[h]\scriptsize  
    \centering
    \caption{Performance on eight networks without node features}
    \resizebox{1\textwidth}{!}{
    \begin{tabular}{cccccc|cccc}
        \toprule
        Dataset &MF&N2V&VGAE&WLNM & SEAL & $2$-WL & $2$-WL$_{L}$ & $2$-FWL & $2$-FWL$_{L}$\\
        \midrule
        USAir & 94.08$\pm$0.80& 91.44$\pm$1.78&89.28$\pm$1.99&95.95$\pm$1.10& 97.09$\pm$0.70 & 92.86$\pm$ 1.08 & 93.48$\pm$ 0.74 & \textbf{98.10$\pm$ 0.52}  &96.06$\pm$ 0.51\\
        NS & 74.55$\pm$4.34& 91.52$\pm$1.28&94.04$\pm$1.64&98.61$\pm$0.49& 97.71$\pm$0.93 & 97.15$\pm$ 0.78 & 97.37$\pm$ 0.48 & 98.85$\pm$ 0.43 &\textbf{99.49$\pm$ 0.12}\\
        PB & 94.30$\pm$0.53& 85.79$\pm$0.78&90.70$\pm$0.53&93.49$\pm$0.47& \textbf{95.01$\pm$0.34} &  93.61$\pm$ 0.54 & 94.09$\pm$ 0.26 & 94.07$\pm$ 0.47 &94.71$\pm$ 0.54\\
        Yeast & 90.28$\pm$0.69& 93.67$\pm$0.46&93.88$\pm$0.21&95.62$\pm$0.52& 97.20$\pm$0.64 & 95.76$\pm$ 0.54 & 95.16$\pm$ 0.25 & \textbf{97.82$\pm$ 0.21} &97.44$\pm$ 0.25\\
        Cele & 85.90$\pm$1.74& 84.11$\pm$1.27&81.80$\pm$2.18& 86.18$\pm$1.72& 86.54$\pm$2.04 &  81.72$\pm$ 2.15 & 86.46$\pm$ 0.98 & \textbf{88.75$\pm$ 3.99} &88.68$\pm$ 1.34\\
        Power & 50.63$\pm$1.10& 76.22$\pm$0.92&71.20$\pm$ 1.65&84.76$\pm$0.98 & 84.18$\pm$1.82 & 74.10$\pm$ 1.90 & 81.02$\pm$ 1.25 & 72.21$\pm$ 1.16 &\textbf{85.01$\pm$1.18}\\
        Router & 78.03$\pm$1.63& 65.46$\pm$0.86&61.51$\pm 1.22$&94.41$\pm$0.88& 95.68$\pm$1.22 & 96.02$\pm$ 0.61 & \textbf{96.68$\pm$ 0.50} &95.34$\pm$ 0.79 &94.91$\pm$0.64\\
        Ecoli & 93.76$\pm$0.56& 90.82$\pm$1.49&90.81$\pm$0.63&97.21$\pm$0.27& 97.22$\pm$0.28 &  96.12$\pm$ 0.48 & 95.69$\pm$ 0.52 & \textbf{98.42$\pm$ 0.21} & 97.03$\pm$ 0.57\\
        \bottomrule
    \end{tabular}}
    \label{tab:AUC1}
\end{table}

\begin{table}[h]\small
    \centering
    \caption{Performance on citation networks with node features. OOM: Out of memory.}
    \setlength\tabcolsep{3pt}
    \resizebox{1\textwidth}{!}{
    \begin{tabular}{cccccc|cccc}
        \toprule
        Dataset & VGAE & S-VGAE & TLC-GNN & SEAL & NBFNet & $2$-WL & $2$-WL$_{L}$ & $2$-FWL & $2$-FWL$_{L}$\\
        \midrule
        Cora &91.4 &94.1& 93.4 & 93.3& 95.6& 93.15$\pm$1.17 & 94.25$\pm$ 0.75 &  \textbf{96.03$\pm$0.52} & 95.18$\pm$0.86\\
        Citeseer &90.8&94.7& 90.9 & 90.5&92.3 &94.45$\pm$1.01 & 93.25$\pm$ 1.35 & 95.28$\pm$ 0.76& \textbf{95.89$\pm $0.96}\\
        Pubmed &94.4 &96.0& 97.0 & 97.8 &98.3 & OOM & \textbf{98.66$\pm$ 0.16} & OOM & 98.46$\pm$0.19\\
        \bottomrule
    \end{tabular}}
    \label{tab:AUC2}
    \vspace{-10pt}
    
\end{table}

According to the results, our $2$-WL-GNNs achieve generally better performance than the baseline models. Specifically, the $2$-WL and $2$-WL$_L$ models perform competitively with SEAL on a large number of datasets and the $2$-FWL and $2$-FWL$_L$ models obtain overall better results than SEAL. On Cora, $2$-FWL achieves a new state-of-the-art result of 96.03. On Citeseer and Pubmed, our $2$-FWL$_L$ model achieves new state-of-the-art results of 95.89 and 98.46, while $2$-WL$_L$ achieves 98.66 on Pubmed, both outperforming the previous SoTA NBFNet. Their outstanding performance verifies the effectiveness of directly using links as message passing units to learn their representations.

Theoretically, both labeling trick methods and $2$-FWL models are more expressive than $1$-WL models like VGAE and S-VGAE, which is reflected in their performance comparisons. However, we found even $2$-WL and local $2$-WL models can sometimes outperform $1$-WL-GNNs by large margins, especially on networks without node features. This might be explained by that the direct learning of link representations and the message passing along edge adjacency might capture better edge topology than node-centered methods. Furthermore, we found that the global versions $2$-WL and $2$-FWL do not always achieve better performance than their local versions $2$-WL$_L$ and $2$-FWL$_L$, despite being theoretically more powerful. This might be because the local versions focus more on local neighborhood around links, which is proved to contain the most useful information for link prediction~\citep{zhang2018link}. Considering the significantly larger memory requirement (OOM in Pubmed), we recommend to use the local versions in most cases due to their efficiency and scalability.

\begin{wraptable}[7]{r}{7cm}\small
    \setlength\tabcolsep{3pt}
	\centering
	\vspace{-15pt}
	\caption{Inference time comparison.}
	\begin{tabular}{ccccc}
        \toprule
        Dataset & $2$-WL$_{L}$ &  $2$-FWL$_{L}$& SEAL & NBFNet \\
        \midrule
        Cora & 0.007s & 1.45s& 2.30s& 1.94s\\ 
        Citeseer & 0.006s &0.74s& 2.11s & 1.80s\\
        Pubmed & 0.05s &3.9s& 15.4s & 95s\\
        \bottomrule
    \end{tabular}
    \label{tab:time}
\end{wraptable}


Finally, we present the inference time comparison between local $2$-WL models and labeling trick methods in Table~\ref{tab:time}. We compute prediction scores for all links in the test set and record the inference time of each model. The results demonstrate that local $2$-WL based models have significantly lower inference time than labeling trick methods. This is because local $2$-WL models can predict all the target links by applying the GNN once to the entire graph, while labeling trick methods require repeatedly applying GNNs to a labeled graph for every target link or source node to predict.
In Appendix~\ref{app:kgc}, we additionally evaluate the link prediction performance on knowledge graphs to further examine our proposed $2$-WL-GNNs.



\section{Conclusions}
In this paper, we have proposed two-dimensional Weisfeiler-Lehman graph neural networks for link prediction. We first discuss the problems with the prevalent $1$-WL based models, and then demonstrate the power of using $2$-WL tests to directly obtain link representations. We theoretically characterize the link discriminating power of different $2$-WL variants, including the plain $2$-WL, local $2$-WL, $2$-FWL, and local $2$-FWL. We show that except local $2$-WL, all other tests have stronger power than $1$-WL. We further propose a series of novel GNNs implementing the $2$-WL tests. Experiments on multiple benchmark datasets show the effectiveness of $2$-WL-GNNs for link prediction.  Our code is available at \url{https://github.com/GraphPKU/2WL_link_pred}.

\bibliography{paper}
\bibliographystyle{icml2022}

\appendix

\newpage
\section{Proof of Theorem~\ref{thm:local2wl}}

\textbf{Theorem}:
$2$-WL$_L$ has the same discriminating power as $1$-WL for link prediction.

\begin{proof}
This and all of other theorems have a presumption that whether the target links are connected is unknown. Their indicator of edge existence is set to zero, otherwise the series of $2$-WL tests can directly give the correct prediction. This is necessary for the link prediction context. 

We measure the link discriminating power by constructing subtrees. Given an undirected graph $G=(V,E,l),~p, q\in V$, let $T_{\mathscr{A}}$, $T_{\mathscr{B}}$ be mappings from sets of graph-link tuples $(G,(p,q))$ to sets of tree-structured graphs with infinite depth, which are defined as follows.

For graph $G$, $p, q\in G$, $|G|= n$. $T_{\mathscr{A}}(G,(p,q))$ has a root $(p,q)$ labeled as $(l(p),l(q))$ with two branches of child nodes $\{(p,i):(p,i)\in E, i\in [n]\}$ and $\{(j,q):(j,q)\in E, j\in [n]\}$ in the left and right side, respectively. For every child node $(r,s)$, it is labeled as $(l(r),l(s))$. Its child nodes and their labels are defined in the same way recursively.

$T_{\mathscr{B}}(G,(p,q))$ has a root $(p,q)$ which is labeled as $(l(p),l(q))$. It has two branches of child nodes: $\{i:(p,i)\in E,i\in[n]\}$ on the left and $\{j:(j,q)\in E,j\in[n]\}$ on the right. In the following layers node $k$ has children $\{l:(k,l)\in E,l\in[n]\}$.  Node $k$ is labeled in the graph as $l(k)$.

Then we define an equivalent class across the tree-structured graphs: Denote $E_T=\{(prec, next, br):next \textrm{ is the child node of }prec\textrm{ in }\textrm{branch } br \textrm{ in tree }T\}$. If there is a bijective mapping $\pi$ from nodes of a finite-depth tree $T_1$ (denoted by $V(T_1)$) to nodes of a finite-depth tree $T_2$ (denoted by $V(T_2)$) such that 1) $l(i)=l(\pi(i)),\forall i\in V(T_1)$, 2) $(i,j,br)\in E_{T_1} \iff (\pi(i),\pi(j),br)\in E_{T_2},\forall i,j\in V(T_1)$, we say $T_1$ is equivalent to $T_2$, denoted as $T_1\simeq T_2$.

Let $T\vert_k$ refer to the mapping that $T\vert_k(G,e)$ is the first $k$ layers of subtree $T(G,e)$. We define that two infinite-depth trees $T_1$, $T_2$ satisfy $T_1\simeq T_2$ if and only if $T_1\vert_k\simeq T_2\vert_k, \forall k\in \mathbb{N}$.

Given the well defined equivalent class and Definition~\ref{def:1wlindistinguishable}, we notice that $T_{\mathscr{A}}$, $T_{\mathscr{B}}$ depict the process of local $2$-WL and $1$-WL test, that is, \begin{align}
    ((p,q),G)\simeq_{\textrm{2-WL$_L$}}((p',q'),G')\iff
    T_{\mathscr{A}}(G,(p,q))\simeq T_{\mathscr{A}}(G',(p',q'))\\
    ((p,q),G)\simeq_{\textrm{1-WL}}((p',q'),G')\iff
    T_{\mathscr{B}}(G,(p,q))\simeq T_{\mathscr{B}}(G',(p',q'))
\end{align}
Therefore the statement that local $2$-WL and $1$-WL has equivalent link discriminating power equals to that $\forall (G, e), (G', e'),$
\begin{align}
T_{\mathscr{A}}(G,e)\simeq T_{\mathscr{A}}(G',e') \iff T_{\mathscr{B}}(G,e)\simeq T_{\mathscr{B}}(G',e').
\end{align}

According to our definition, we need to prove that for $\forall k\in \mathbb{N}$,
\begin{align}
T_{\mathscr{A}}\vert_k(G,e)\simeq T_{\mathscr{A}}\vert_k(G',e') \iff T_{\mathscr{B}}\vert_k(G,e)\simeq T_{\mathscr{B}}\vert_k(G',e'),~\forall (G,e),(G',e')
\end{align}
For $k=0$, since $1(e), 1(e')$ are unknown,  
we have 
$T_{\mathscr{A}}\vert_0(G,e)\simeq T_{\mathscr{A}}\vert_0(G',e')
\iff l(p)=l(p'),~l(q)=l(q')
\iff T_{\mathscr{B}}\vert_0(G,e)\simeq T_{\mathscr{B}}\vert_0(G',e')$

Suppose $(15)$ works for $k=L$, let's consider the situation of $k=L+1$:

Denote $\{i_1,i_2,...,i_{n_p}\}$, $\{j_1,j_2,...,j_{n_q}\}$ as neighbors of $p,q$ in $G$, and $\{i'_1,i'_2,...,i'_{n_{p'}}\}$, $\{j'_1,j'_2,...,j'_{n_{q'}}\}$ as neighbors of $p',q'$ in $G'$, respectively. 
According to the property of local $2$-WL test, if $T_{\mathscr{A}}\vert_{L+1}(G,e)\simeq T_{\mathscr{A}}\vert_{L+1}(G',e')$, we have $l(p)=l(p'),~l(q)=l(q'),~n_p = n_{p'},~n_q=n_{q'}$ and w.l.o.g.
\begin{align}
T_{\mathscr{A}}\vert_L(G,(p,i_s))\simeq T_{\mathscr{A}}\vert_L(G',(p',i'_s)),~\forall s\in[n_p]\\
T_{\mathscr{A}}\vert_L(G,(j_t,q))\simeq T_{\mathscr{A}}\vert_L(G',(j'_t,q')),~\forall t\in[n_q]
\end{align}

Since $(15)$ works for $k=L$, we have
\begin{align}
T_{\mathscr{B}}\vert_L(G,(p,i_s))\simeq T_{\mathscr{B}}\vert_L(G',(p',i'_s)),~\forall s\in[n_p]\\
T_{\mathscr{B}}\vert_L(G,(j_t,q))\simeq T_{\mathscr{B}}\vert_L(G',(j'_t,q')),~\forall t\in[n_q]
\end{align}

According to property of $1$-WL test, $(18),(19)$ mean $T_{\mathscr{B}}\vert_{L+1}(G,(p,q))$ and $T_{\mathscr{B}}\vert_{L+1}(G',(p',q'))$ have $m+n$ correspondingly isomorphic $L$-depth branches. Plus $l(p)=l(p'),~l(q)=l(q')$ we conclude $T_{\mathscr{B}}\vert_{L+1}(G,(p,q))\simeq T_{\mathscr{B}}\vert_{L+1}(G',(p',q'))$.
The other direction can be similarly proved. Figure~\ref{fig:subtree} gives an illustration.
\end{proof}

\begin{figure}[t]
\centering
\includegraphics[height=5.5cm]{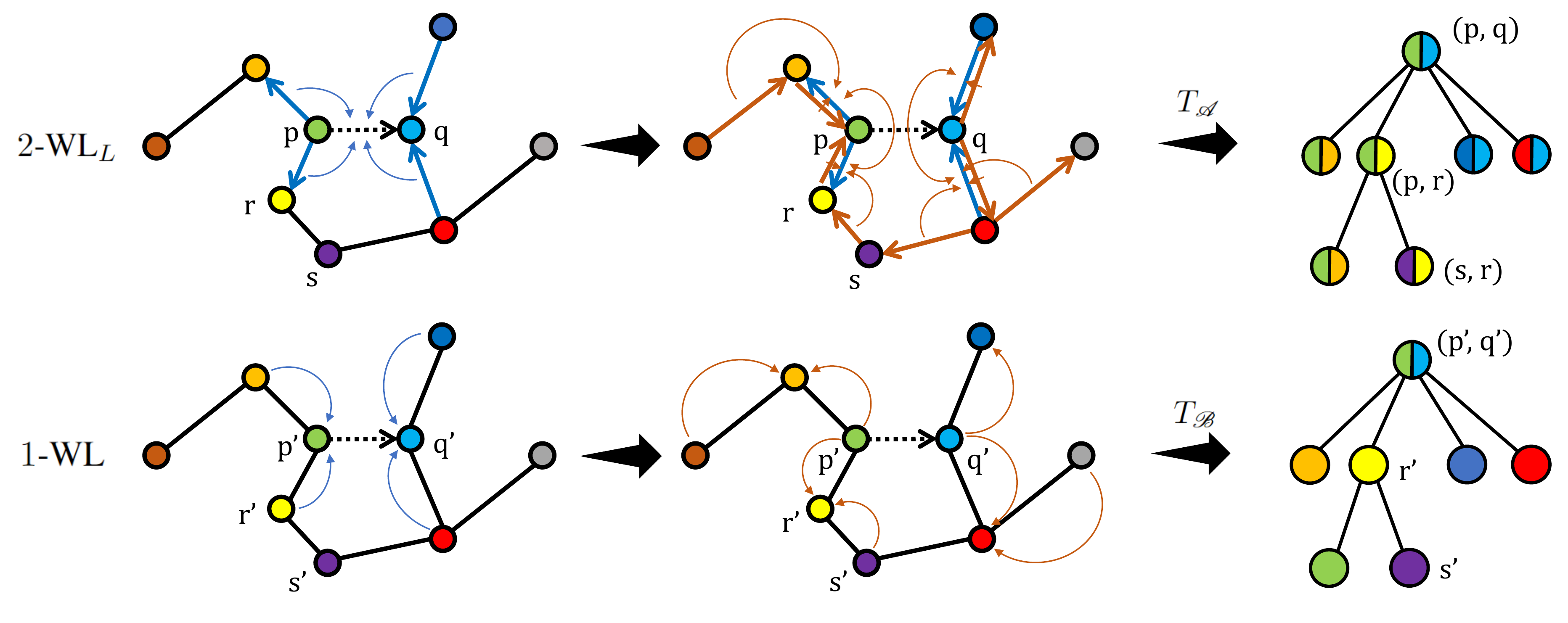}
\caption{
Update patterns of $2$-WL$_L$ and $1$-WL test, and their corresponding mappings from $(G, e)$ to subtrees in our proof. We can build a one-to-one mapping between subtrees of $2$-WL$_L$ and $1$-WL. We show only part of subtrees.
}
\label{fig:subtree}
\end{figure}


\section{Proof of Theorem~\ref{thm:2fwl} and Theorem~\ref{thm:local2fwl}}

\textbf{Theorem:}
$2$-FWL has stronger link discriminating power than $2$-WL.

\begin{proof}
Let $T_{\mathscr{C}}$, $T_{\mathscr{D}}$ be mappings from sets of graph-link tuples to sets of tree-structured graphs with infinite depth.

For graph $G=(V,E,l)$, $p, q\in V$, $|G|=n$. $T_{\mathscr{C}}(G,(p,q))$ has a root $(p,q)$ labeled as $(l(p),l(q))$ with two branches of child nodes $\{(p,i):i\in [n]\}$ and $\{(j,q):j\in [n]\}$ on the left and right side, respectively. For every child node $(r,s)$, its child nodes are defined in the same way recursively. Node $(r,s)$ is labeled as $(l(r),l(s),1_{\{(r,s)\in E\}},1_{\{r=s\}})$.

$T_{\mathscr{D}}(G,(p,q))$ has root $(p,q)$ which is labeled as $(l(p),l(q))$. It has $n$ child nodes $\{((p,i),(i,q))\vert i\in[n]\}$. Node $((p,r),(r,q))$ is labeled as $(l(p),l(r),l(q),1_{\{(p,r)\in E\}},1_{\{(r,q)\in E\}},1_{\{p=r\}},1_{\{r=q\}})$ which has two branches of child nodes $\{((p,t),(t,r))\vert t\in[n]\}$ and $\{((r,s),(s,q))\vert s\in[n]\}$. Each of them has its label and child nodes defined in the same way recursively.

Therefore $T_{\mathscr{C}}$ and $T_{\mathscr{D}}$ depict the process of $2$-WL and $2$-FWL tests. After defining the equivalent class of tree-structured graph as in the proof of Theorem~\ref{thm:local2wl}, we have
\begin{align}
    (G,(p,q))\simeq_{\textrm{2-WL}}(G',(p',q'))\iff
    T_{\mathscr{C}}(G,(p,q))\simeq T_{\mathscr{C}}(G',(p',q'))\\
    (G,(p,q))\simeq_{\textrm{2-FWL}}(G',(p',q'))\iff
    T_{\mathscr{D}}(G,(p,q))\simeq T_{\mathscr{D}}(G',(p',q'))
\end{align}
Let $T\vert_k$ refer to the mapping such that $T\vert_k(G,e)$ is the first $k$ layers of subtree $T(G,e)$.
We will prove that for $\forall k\in\mathbb{N}$,
\begin{align}
T_{\mathscr{D}}\vert_k(G,e)\simeq T_{\mathscr{D}}\vert_k(G',e') \Rightarrow T_{\mathscr{C}}\vert_k(G,e)\simeq T_{\mathscr{C}}\vert_k(G',e'),~\forall (G,e),(G',e')
\label{eq:22}
\end{align}
Fix $(G,e),(G',e'),G=(V,E,l),G'=(V',E',l')$. Let $n=\vert V\vert, n'=\vert V'\vert$. When $k=0$, $T_{\mathscr{D}}\vert_0(G,e)\simeq T_{\mathscr{D}}\vert_0(G',e')
\Rightarrow l(p)=l(p'),~l(q)=l(q')
\Rightarrow T_{\mathscr{C}}\vert_0(G,e)\simeq T_{\mathscr{C}}\vert_0(G',e')$

Suppose (\ref{eq:22}) is true for $k=L, L\ge0$. Let's consider the situation of $k=L+1$.
According to the property of $2$-FWL test, if $T_{\mathscr{D}}\vert_{L+1}(G,e)\simeq T_{\mathscr{D}}\vert_{L+1}(G',e')$, we immediately have $n=n'$ and w.l.o.g.
\begin{align}
&l(i)=l(i'),~\forall i\in[n]\\
&1_{\{(p,i)\in E\}}=1_{\{(p',i')\in E'\}},~\forall i\in[n]\\
&1_{\{(i,q)\in E\}}=1_{\{(i',q')\in E'\}},~\forall i\in[n]\\
(T_{\mathscr{D}}\vert_L(G,(p,i)),T_{\mathscr{D}}\vert_L(&G,(i,q)))\simeq(T_{\mathscr{D}}\vert_L(G',(p',i')),T_{\mathscr{D}}\vert_L(G',(i',q'))),~\forall i\in[n]
\end{align}
Then we have
\begin{align}
T_{\mathscr{D}}\vert_L(G,(p,i))\simeq T_{\mathscr{D}}\vert_L(G',(p',i')),~\forall i\in[n]\\
T_{\mathscr{D}}\vert_L(G,(j,q))\simeq T_{\mathscr{D}}\vert_L(G',(j',q')),~\forall j\in[n]
\end{align}
Due to that (\ref{eq:22}) is true for $k=L$, we have
\begin{align}
T_{\mathscr{C}}\vert_L(G,(p,i))\simeq T_{\mathscr{C}}\vert_L(G',(p',i')),~\forall i\in[n]\\
T_{\mathscr{C}}\vert_L(G,(j,q))\simeq T_{\mathscr{C}}\vert_L(G',(j',q')),~\forall j\in[n]
\end{align}
According to (23), (24), (25), (29), (30) and the definition of $T_{\mathscr{C}}$, we have
\begin{align}
T_{\mathscr{C}}\vert_{L+1}(G,(p,q))\simeq T_{\mathscr{C}}\vert_{L+1}(G',(p',q')),~\forall i\in[n]
\end{align}
On the other side the counterexample lies in Figure~\ref{fig:4}
\end{proof}

\textbf{Theorem:}
$2$-FWL$_L$ has stronger link discriminating power than $2$-WL$_L$.
\begin{proof}
The proof is the same as the proof of Theorem~\ref{thm:2fwl} except that (23)-(30) works for $p,q$ and their neighbors instead of all nodes.
\end{proof}
\begin{figure}[t]
\centering
\includegraphics[height=3.0cm]{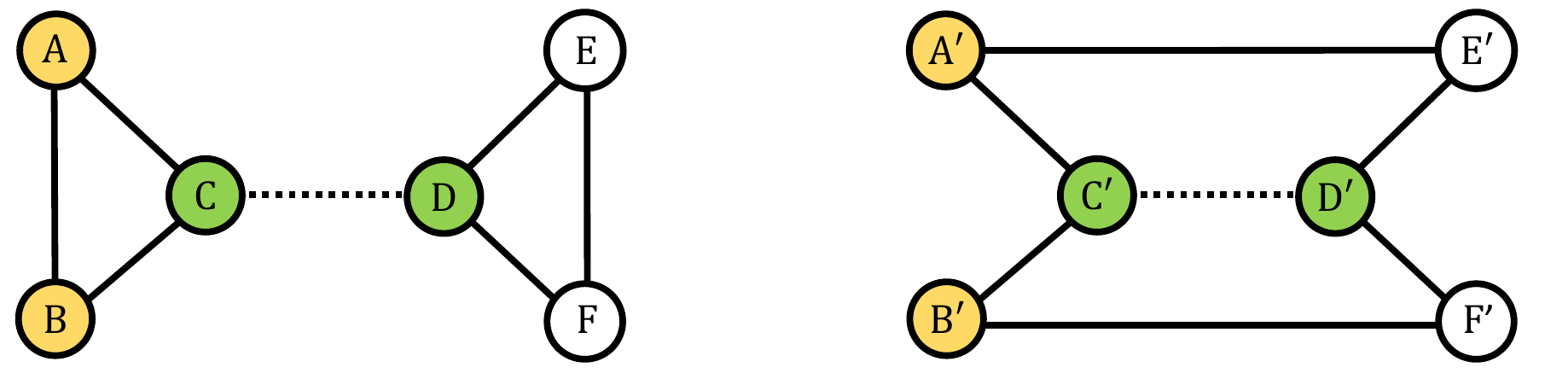}
\caption{This figure contains two counterexample. First, links $(A,B)$ and $(A',E')$ cannot be distinguished by plain $2$-WL but can be distinguished by (local) $2$-FWL and $1$-WL with 0/1 labeling trick. In fact due to high node-level symmetry $2$-WL cannot detect difference between any connected link pairs or unconnected link pairs. The labeling trick breaks such symmetry and help $1$-WL to capture the difference of two graph's structure.
If $(C,D)$ and $(C',D')$ are target node pairs, 0/1 labeling trick no longer works. However, $2$-FWL and local $2$-FWL still work (because $(D,E)$ and $(D',E')$ will have different representations). They can capture triple structure as $3$-WL test does.}
\label{fig:4}
\end{figure}

\section{Extended discussion on labeling trick}
In this section, we compare the link discriminating power between $2$-WL tests and $1$-WL with labeling tricks.
There are two most classic labeling tricks for link prediction: the 0/1 labeling and distance-based labeling, the former labels the target nodes pair with one and other nodes with zero. A classic instance of distance-based labeling is DRNL (Double-Radius Node Labeling) in \citep{zhang2018link}. It constructs an injective function of distances from current node to two target nodes. Such a technique inherently makes use of the information of all paths to the target nodes within the extracted subgraph, which itself is a strong heuristic of link prediction. Note that node labeling can be directly included in label (feature) $l$.  

Here we mainly discuss 0/1 labeling in the following and leave the discussion on distance-based labeling tricks and more general ones to the future work. \citet{zhang2021labeling} has discussed the theoretical power of 0/1 labeling trick and showed that it enhances $1$-WL's link discriminating power. We further compare the link discriminating power of $1$-WL with labeling trick and $2$-WL tests in the following theorem:


\begin{theorem}\label{label2}
For 0/1 labeling trick $L$, $1$-WL test with $L$ and local $2$-FWL test both do not have equal or stronger link discriminating power than the other.
\end{theorem}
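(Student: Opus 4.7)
The plan is to establish incomparability by producing a pair of witnesses, one in each direction. Specifically, I must exhibit (A) a pair $(\ve_A, G_A), (\ve_A', G_A')$ that local $2$-FWL distinguishes but $1$-WL with $L$ does not, and (B) a pair $(\ve_B, G_B), (\ve_B', G_B')$ that $1$-WL with $L$ distinguishes but local $2$-FWL does not. Together these witnesses imply that neither test is $\preceq$ the other, yielding the claimed incomparability.

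For direction (A), I would adopt the construction of Figure~\ref{fig:4}, taking the target pairs $(C,D)$ in $G$ and $(C',D')$ in $G'$ as the witnesses. The verification consists of two parts. First, once 0/1 labels are placed on $\{C,D\}$ and $\{C',D'\}$, running $1$-WL on the two resulting labeled graphs produces identical color sequences at the target nodes, since the $1$-WL refinement emanating from the labeled pair sees the same local pattern in both graphs; I would check this by simulating a few rounds of the $1$-WL update. Second, local $2$-FWL assigns different Folklore colors to the three-node substructures involving $E$ and $E'$: the links $(D,E)$ and $(D',E')$ obtain distinct colors at iteration one because of a common-neighbor count mismatch, and this difference propagates into $(C,D)$ and $(C',D')$ at iteration two via Equation~(\ref{eq:2fwl}), which I would verify by expanding the local Folklore update on the relevant pairs.

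For direction (B), I would construct a separate small example in which the labeling trick exposes an asymmetry that local $2$-FWL's link-level update cannot see. The strategy is to place two non-isomorphic target links into a graph with enough link-local symmetry that their Folklore neighborhoods coincide as multisets at every iteration of local $2$-FWL, while the two 0/1-labeled versions remain $1$-WL-distinguishable at the target endpoints. This is the main obstacle of the proof, because local $2$-FWL is already quite powerful (it captures common neighbors and other three-node features), so the example must be engineered carefully; concretely, I would exploit the fact that 0/1 labeling reduces link discrimination to node discrimination on a modified labeled graph, providing a distinguishing mechanism orthogonal to the Folklore link-update rule. I would then validate the construction by running both refinements symbolically and confirming that the local $2$-FWL color sequences of the two target links coincide across all iterations while the $1$-WL colors on the corresponding labeled graphs eventually separate.
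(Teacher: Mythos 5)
Your overall structure is right and matches the paper's: incomparability is proved by exhibiting one witness pair in each direction, and your direction (A) is exactly the paper's argument — the pair $(C,D)$, $(C',D')$ of Figure~\ref{fig:4}, where the $0/1$ labels fail to break the symmetry but $2$-FWL$_L$ separates the links because $(D,E)$ and $(D',E')$ acquire different colors and this difference feeds back into the targets.

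Direction (B), however, contains a genuine gap: you describe the \emph{specification} of the required counterexample (two target links whose local-Folklore color sequences coincide forever while the $0/1$-labeled graphs are $1$-WL-separable) but never produce a graph pair meeting it, and you yourself flag this as "the main obstacle.'' Since the theorem is precisely the assertion that such a pair exists, the proof is incomplete without it. The paper closes this gap with a concrete construction: two $4\times 4$ "magic square'' graphs (nodes adjacent iff they share a row, a column, or a number), which are strongly regular with identical parameters. Strong regularity is the key mechanism you are missing — in such graphs every adjacent pair has the same number of common neighbors and every non-adjacent pair likewise, so the $2$-FWL$_L$ update, whose only source of discriminating information is exactly these three-node counts, stabilizes immediately and cannot separate the two target links. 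Meanwhile the $0/1$ labels single out the unique pair $(r,s)$ (resp. $(r',s')$) having two labeled neighbors, and since $(r,s)\notin E$ while $(r',s')\in E'$, the labeled $1$-WL refinements diverge. If you want to complete your proof along your own lines, strongly regular graphs with matching parameters are the standard (and essentially the only low-order) place to look, because they are precisely the graphs on which common-neighbor statistics carry no information.
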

\begin{proof}
 $(C,D),~(C',D')$ in Figure~\ref{fig:4} present an example that $2$-FWL$_L$ can discriminate but $1$-WL with $L$ cannot.
  On the other hand, let's consider 4-order magic square graphs. Below are two $4\times4$ grid graphs without node features. Each node has a number from $\{1,~2,~3,~4\}$ on it. Two nodes have edges if and only if they are 1) in the same row, or 2) in the same column, or 3) holding the same number. The colored node pair $(p,q),~(p',q')$ are the target links. Notice that they are both strongly regular graphs and $2$-FWL cannot discriminate the two links because any node pair with edge has two common neighbors and any node pair without edge also has two common neighbors.
 \begin{figure}[h]
\centering
\includegraphics[height=3.5cm]{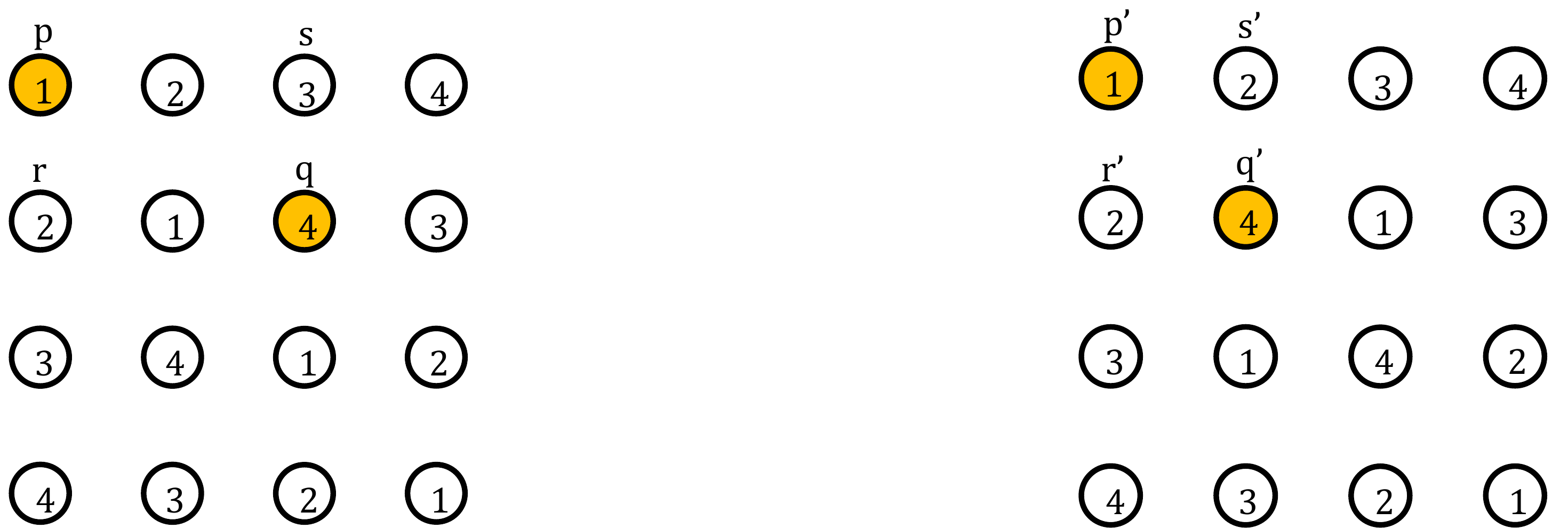}
\label{fig:5}
\end{figure}

 $1$-WL with 0/1 labeling can discriminate $(p,q),~(p',q')$. If not, $(r,s),~(r',s')$ (or $(r,s),~(s',r')$) will be indistinguishable from each other but distinguishable from other node pairs because they are the only nodes that have two labeled children. However they can actually be discriminated since $(r,s)$ does not have edge but $(r',s')$ does, which leads to a contradiction.
\end{proof}

\section{More details on GNN implementations} \label{app:2fwlimplementation}
\textbf{Computing infrastructure.} We leverage Pytorch Geometric V2.0.2 and Pytorch V1.10.0 for model development.
We train our models, measure AUROC and the inference time on
an A40 GPU with 48GB memory on a Linux server.

\textbf{Implementation of Message Passing Networks.} Raw node features are used for initial node embeddings. If there are no raw node features, we take embeddings of node degrees. Then we use $1$-WL-GNN layers to deal with node embeddings.\\
\textbf{$2$-WL:} Pool node embeddings to obtain $n^2$ link embeddings. Add an adjacency matrix as a slice of $n*n*d$ link embedding tensor and then apply (9), (10) in every $2$-WL-GNN layer.\\
\textbf{$2$-WL$_L$:} Denote observed edges as $E$ and mini-batch of target links as $S$. pool node embeddings to obtain $\vert E\vert + \vert S\vert$ link embeddings. For target link $(p,q)$, apply two different GCN layers to process links $\{(p,i)\}$, $\{(j,q)\}$ respectively and take their sum to form a whole $2$-WL-GNN layer.\\
\textbf{$2$-FWL:} Pool node embeddings to obtain $n^2$ link embeddings. Add an identity and an adjacency matrix as two slices of $n*n*d$ link embedding tensor. Apply linear layers on the third dimension and slice-wise matrix multiplication.\\
\textbf{$2$-FWL$_L$:} Denote observed edges as $E$ and mini-batch of target links as $S$. Pool node embeddings to obtain $\vert E\vert$ positive link embeddings and $\vert S\vert$ negative link embeddings. Reform positive link embeddings to sparse tensor, apply linear layers and slice-wise sparse matrix multiplication on it as one $2$-WL-GNN layer. Finally concatenate $2$-WL representation with link embeddings to obtain nonzero representations.

For more details on the implementations, please refer to our code.

\textbf{Baselines.} For AUROC of methods: MF, N2V, VGAE, WLNM, SEAL on non-featured datasets, we directly use the results in \citet{zhang2018link}. For AUROC of methods: VGAE, S-VGAE, TLC-GNN, SEAL, NBFNet on citation datasets, we directly use the results in \citet{zhu2021neural}. For performance of KGC methods: R-GCN, GraIL, INDIGO on KG datasets, we use the results in \citet{liu2021indigo}

\textbf{Hyperparameter tuning.} Hyperparameters are selected based on validation set performance. The best hyperparameters can be found in our code in the supplement material. Learning rate $lr$ is chosen from: $\{5e-2,~1e-2,~5e-3,~1e-3,~5e-4\}$, hidden dimension for $1$-WL-GNN $h_1$: $\{32,~64,~96,~128\}$, number of hidden layers for $1$-WL-GNN $l_1$: $\{1,~2,~3\}$, number of hidden layers for $2$-WL-GNN $l_2$: $\{1,~2,~3\}$, hidden dimension for $2$-WL-GNN $h_2$ : $\{16,~24,~32,~64,~96\}$, dropout ratio for embedding layer $dp_1$, $1$-WL layer $dp_2$, $2$-WL layer $dp_3$: $\{0.1,~0.2,~0.3,~0.4,~0.5\}$. We use Optuna~\citep{akiba2019optuna} to perform random searching for hyperparameters.

\section{More experiments on knowledge graph datasets}\label{app:kgc}
In this section, we conduct an additional experiments to test $2$-WL-GNNs' link prediction performance on inductive knowledge graph completion (KGC). We adopt two datasets, FB15K-237 and WN18RR from \citep{teru2020inductive} to evaluate the performance. Each dataset includes four versions v1 to v4 with increasing sizes. The baselines we use are state-of-the-art inductive KGC methods including R-GCN~\citep{schlichtkrull2017modeling}, GraIL~\citep{teru2020inductive}, and a recent line-graph-based model INDIGO~\citep{liu2021indigo}. We compare them with our GNN implementations of $2$-WL$_L$ and $2$-FWL$_L$ using three metrics: accuracy (ACC), area under the ROC curve (AUROC) and Hits@3. The results are given in Table~\ref{tab:KG}. Best and second-to-best results are in bold and with underlines respectively.
\begin{table}[t]\small
    \centering
    \caption{Performance on KG datasets (\%). Higher the better.}
    \begin{tabular}{cc|cccc|cccc}
        \toprule
        &&\multicolumn{4}{c}{FB15K-237}&\multicolumn{4}{c}{WN18RR}\\
        && v1 & v2 & v3 & v4 & v1 & v2 & v3& v4\\
        \midrule
        \multirow{5}{*}{ACC}&R-GCN& 51.0& 51.3& 54.9& 52.1& 50.2& 52.7& 52.2&48.4\\
        ~ &GraIL& 69.0& 80.0& 81.0& 79.3& \textbf{88.7}& 81.2& 75.7&86.4\\
        ~ &INDIGO& 84.3& 89.3&89.0 &87.8 &\underline{85.7} &85.8 &\textbf{84.3} &85.4\\
        ~ &$2$-WL$_L$& \underline{85.7}& \underline{93.2} &\underline{90.0} &\underline{91.1} &84.7 &\underline{86.5} &\underline{79.9} & \underline{86.8}\\
        ~ &$2$-FWL$_L$& \textbf{90.7}& \textbf{94.7} &\textbf{93.9} &\textbf{91.8} &84.7 &\textbf{86.7} &81.5 & \textbf{88.7}\\
        \midrule
        \multirow{5}{*}{AUROC}&R-GCN& 51.0& 50.5& 50.5& 52.6& 49.0& 49.8& 53.1&50.2\\
        ~ &GraIL& 78.6& 90.0& 93.1& 89.5& \underline{92.3}& 92.7& 82.8& 94.4\\
        ~ &INDIGO& \underline{93.4}& \underline{96.3}& 96.6& 95.8& 91.2& 92.5& \textbf{92.4}& \underline{94.7}\\
        ~ &$2$-WL$_L$& 87.9& 95.7& \underline{96.9}& \textbf{97.7}& 88.5& \textbf{93.3}& \underline{86.6}& 89.1\\
        ~ &$2$-FWL$_L$& \textbf{95.3}& \textbf{98.2}& \textbf{97.5}& \underline{96.6}& \textbf{92.8}& \textbf{93.3}& 85.9& \textbf{95.4}\\
        \midrule
        \multirow{5}{*}{Hits@3}&R-GCN& 2.4& 3.4& 3.5& 3.3& 2.1& 11.0& 24.5& 8.1\\
        ~ &GraIL& 1.0& 0.4& 6.6& 3.0& 0.6& 10.7& 17.5& 22.6\\
        ~ &INDIGO& 53.1& 67.6& 66.5& 66.3& \textbf{98.4}& \textbf{97.3}& \textbf{91.9}& 96.1\\
        ~ &$2$-WL$_L$& \underline{70.8}& \underline{79.0}& \underline{79.5}& \underline{79.8}& \underline{97.8}& 96.1&83.7& \underline{96.2}\\
        ~ &$2$-FWL$_L$& \textbf{71.5}& \textbf{84.2}& \textbf{81.7}& \textbf{78.3}& 97.4& \underline{96.6}& \underline{85.2}& \textbf{97.3}\\
        \bottomrule
    \end{tabular}
    \label{tab:KG}
\end{table}

As we can see, $2$-FWL$_L$ generally achieves the strongest performance with \textbf{17 highest metric numbers out of 24}, and 3 second-to-best metric numbers among the remaining. This again verifies the higher link expressive power brought by the $2$-FWL tests. On the other hand, $2$-WL$_L$ and INDIGO perform competitively too. As discussed in the related work, INDIGO can be understood as a special implementation of local $2$-WL by leveraging line graphs. The excellent performance of $2$-WL methods further verifies the advantage of directly learning link representations. Specifically, we notice that these link-centered methods have much higher Hits@3 than the other node-centered baselines, indicating that link-centered methods are better at ranking the correct links at the top. This is especially important in real-world applications where we can only focus on top-ranked predictions.

\end{document}